\documentclass[onecolumn,11pt]{article}
\usepackage[top=1.2in, bottom=1.2in, left=1.1in, right=1.1in]{geometry}
\usepackage[utf8]{inputenc}
\usepackage[T1]{fontenc}
\usepackage{listings}
\usepackage{setspace}
\usepackage{etoolbox}
\usepackage{xcolor}
\usepackage{courier}
\usepackage{caption}
\usepackage{graphicx}
\usepackage{cprotect}
\usepackage{xspace}
\usepackage[affil-it]{authblk} 
\usepackage[symbol]{footmisc}
\graphicspath{ {./figures/} }
\usepackage{mathpazo}
\patchcmd{\maketitle}{\@fnsymbol}{\@alph}{}{}  

\usepackage[numbers,sort&compress]{natbib}
\usepackage{tikz}
\usetikzlibrary{arrows,calc,positioning,shadows,shapes,shapes.geometric}
\usepackage{pgfplots}
\usepackage{adjustbox}
\usepackage{pifont}
\usepackage{booktabs}
\usepackage{array}
\usepackage{amsmath}
\usepackage{siunitx}
\usepackage{wrapfig}

\usepackage{hyperref}       
\usepackage{url}            
\usepackage{booktabs}       
\usepackage{amsfonts}       
\usepackage{nicefrac}       
\usepackage{microtype}      
\usepackage[table]{xcolor}   
\usepackage{amsthm}
\usepackage{enumitem}
\newcommand{\cmark}{\ding{51}}
\newcommand{\xmark}{\ding{55}}
\usepackage{amssymb}
\usepackage{fix-cm} 

\usepackage[nohyperlinks,nolist]{acronym}

\usepackage[title]{appendix}

\newtheorem{assumption}{Assumption}

\newtheorem{proposition}{Proposition}
\newtheorem{theorem}{Theorem}
\newtheorem{corollary}{Corollary}

\newtheorem{lemma}{Lemma}

\newcommand{\R}{\mathbb{R}}

\newcommand{\E}{\mathbb{E}}

\newcommand{\bz}{\mathbf{z}}
\newcommand{\bx}{\mathbf{x}}
\newcommand{\bc}{\mathbf{c}}
\newcommand{\bw}{\mathbf{w}}

\newcommand{\balpha}{\boldsymbol{\alpha}}

\newcommand{\bW}{\mathbf{W}}

\newcommand{\Softmax}{\operatorname{Softmax}}
\newcommand{\argmin}{\operatorname*{arg\,min}}

\newcommand{\be}{\mathbf{e}}

\title{\textbf{\huge{Learning Individual Dynamics from \\Sparse Cross-Sectional Snapshots}}\vspace{-.1in}}
\author[*,1]{Christian Lagemann}
\author[*,2]{Kai Lagemann}
\author[3]{Steven L. Brunton}
\author[4]{Sach Mukherjee\vspace{-.1in}}
{\small
\affil[1]{\footnotesize Statistics and Machine Learning, German Center for Neurodegenerative Diseases (DZNE), Bonn, Germany}
\affil[2]{\footnotesize MediaTek Research, London, United Kingdom}
\affil[3]{\footnotesize Department of Mechanical Engineering \& 
AI Institute in Dynamic Systems, University of Washington, Seattle, United States}
\affil[4]{\footnotesize DZNE \& University of Bonn, Bonn, Germany and 
University of Cambridge, Cambridge, United Kingdom}
}
\date{}

\usepackage{mathrsfs}
\usepackage{mathtools}
\usepackage{dsfont}
\usepackage{bm}
\usepackage{lipsum}
\usepackage{outlines}
\usepackage{float,tabularx}

\usepackage{algorithm,algcompatible}

\algnewcommand\algorithmicto{\textbf{to}}
\algnewcommand\RETURN{\STATE \textbf{return} }
\makeatletter
\newcommand{\multiline}[1]{%
    \begin{tabularx}{\dimexpr\linewidth-\ALG@thistlm}[t]{@{}X@{}}
        #1
    \end{tabularx}
}

\renewcommand*\E[1]{\mathbb{E}\left[#1\right]}

\usepackage{multirow}
\usepackage{subcaption}

\begin{document}

\maketitle
\renewcommand{\thefootnote}{\fnsymbol{footnote}}
\footnotetext[1]{\thanks{Authors contributed equally}}

\begin{abstract}
   Predicting how a dynamical unit evolves over time— how an individual ages, an epidemic spreads, or a physical system degrades— typically requires dense longitudinal tracking. When only extremely sparse or entirely cross-sectional data is available, inferring individualized, continuous-time trajectories is fundamentally ill-posed. Existing methods force a strict compromise: sequence models (e.g. latent ODEs) require dense {\it longitudinal} data, while cross-sectional methods (e.g. optimal transport, flow matching-based) map aggregate populations, losing {\it individual} dynamics. In this paper, we demonstrate that this dichotomy can be broken. We introduce CADENCE, a principled probabilistic framework that recovers continuous individual trajectories from isolated snapshots by anchoring latent dynamics to static, individual-level contexts. We provide novel identifiability guarantees for single-timepoint trajectory inference. By combining a score-based spatial encoder (bijective Probability Flow ODE) to eliminate diffeomorphic ambiguities with a Soft Mixture-of-Experts (SMoE) router, we show that individual dynamical parameters and routing function are jointly identifiable. Across a suite of benchmarks spanning physical systems to real-world biological data, CADENCE, trained strictly on extremely sparse snapshots with context structure, matches or exceeds the performance of state-of-the-art sequential models trained on dense, full-trajectory data.
\end{abstract}

\vspace{10pt}

\section{Introduction}
\label{sec:intro}

Understanding unit-level dynamics -- how individuals age, how patients progress through a disease, or how complex engineering systems degrade over time -- is a central goal across the quantitative sciences. The gold standard for capturing these continuous-time dynamics is dense longitudinal  data at the unit level, i.e. data in which the same unit (patient, mechanical system) is densely observed through time. However, in a wide range of real-world settings, 
such data may be difficult, expensive or even impossible to obtain, or  units may be prone to high attrition rates. As a result, much of the available large-scale data in a number of domains is \emph{very sparse} in terms of time-points per unit---including, among others,  longitudinal biomedical studies 
---where each unit is observed only a few times (possibly $\leq 3$ snapshots), often at irregular intervals. This extreme sparsity presents a fundamental dilemma: can we forecast individual, continuous-time trajectories without meaningful longitudinal measurements? 

\medskip
Mathematically, inferring an individual continuous vector field from unpaired cross-sectional data is a severely ill-posed inverse problem. 
Existing neural-dynamical methods face critical failure modes in this regime. Sequence models (e.g. Latent ODEs) need sufficiently long per-unit sequences to propagate gradients through time; with only 1--3 snapshots, they cannot learn meaningful temporal representations. Cross-sectional methods (optimal transport, flow matching) sidestep this by transporting aggregate distributions $P(t_1) \to P(t_2)$---but they ``track the crowd", losing {\it individual} information entirely. Contextual dynamics models (e.g. CoDA, LEADS) condition ODE parameters on individual context, but universally require dense sequential observations to fit the routing function: without per-unit trajectories, they have no mechanism to assign a new realization to a dynamical regime. 

\bigskip
\textbf{The core insight.} Temporal sparsity is  not necessarily a fundamental barrier if the ensemble has contextual structure. 
If such structure exists, a late-stage unit's trajectory can act as a temporal surrogate for a contextually similar early-stage unit. Then, pooling information across contextually matched individuals converts a sparse cross-section into an effective longitudinal signal. We translate this high-level intuition into CADENCE (Contextual Archetypes and Diffusion ENCodings for Dynamics Estimation), a unified methodological framework for learning latent individual trajectories from extremely sparse data. 
At one extreme, given a unit with a known contextual profile observed at only a {\it single} time point $t$, we ask whether it is possible to rigorously predict its future state at time $T$. An overview of the framework is shown in Figure~\ref{fig:overview}. Our main contributions are:
\begin{enumerate}[leftmargin=*]
  \item \textbf{Identifiability theory under explicit assumptions.} We provide identifiability guarantees for single-timepoint trajectory inference (Theorems~\ref{thm:foliation_id}--\ref{thm:composition}) under three structural assumptions grounded in
  foliation theory, nonlinear observability, and convex geometry. These constraints determine the CADENCE architecture: the score-based bijective Probability Flow ODE (PF-ODE) eliminates variational ambiguity while the SMoE router arises as a unique minimal implementation (rather than heuristic choice).
  \item \textbf{New learning paradigm.} The identifiability result establishes a new paradigm: individual continuous-time trajectory inference from $\leq 3$ observations per unit, formally bridging cross-sectional optimal transport and dense sequential modeling 
  (Table~\ref{tab:capability}).
  \item \textbf{Decoupled architecture \& strong empirical performance.} We introduce a decoupled two-stage training scheme that reduces computational cost from $O(\mathrm{Steps_{spatial}} \times \mathrm{Steps_{temporal}})$ to $O(\mathrm{Steps_{temporal}})$. Across seven benchmarks spanning epidemiology, ecology, and nonlinear physics, CADENCE, trained only on sparse snapshot data with context structure, matches or exceeds the performance of state-of-the-art (SOTA) sequential models trained on dense, full-trajectory datasets.
\end{enumerate}

\begin{table}[t]
\centering\small
\begin{tabular}{p{4cm}ccc}
\toprule
Method & Individual forecast & Context-dependent dynamics & Cross-sectional training \\
\midrule
OT / Flow matching$^{*}$        & \xmark & \xmark & \cmark \\
Latent ODE$^{\#}$                & \cmark & \xmark & \xmark \\
Context models$^{\dagger}$  & \cmark & \cmark & \xmark \\
\textbf{CADENCE (ours)}         & \cmark & \cmark & \cmark \\
\bottomrule
\end{tabular}
\caption{%
  Comparison of trajectory inference frameworks across three key desiderata.
  CADENCE is the only method satisfying all three simultaneously. Examples for comparison methods include
  $^{*}$\cite{tong2024otcfm,sha2024reconstructing,huguet2022manifold}, 
  $^{\#}$\cite{rubanova2019latent,lagemann2023invariance,iakovlev2023latent}, and
  $^{\dagger}$\cite{kirchmeyer2022generalizing,yin2021leads}.
}
\label{tab:capability}
\end{table}

%

\section{Problem Formulation}
\label{sec:problem}
We first formalize the generative model and discuss non-identifiability. This directly motivates the structural assumptions in Section~\ref{sec:model}. Full proofs are provided in Appendix~\ref{app:proofs}.

\bigskip 

\textbf{Latent dynamical model.}
We consider an ensemble of $N$ units/realizations, each observed once at an entry time $t_i \in \mathcal{T}$ yielding a measurement $\bx^i_{t_i} \in \R^p$ and a time-invariant context $\bc_i \in \R^r$. We assume the underlying dynamics unfold in a low-dimensional latent space $\R^q$ ($q \ll p$), governed by a neural ODE vector field $h_\theta$. Observations are then generated via a smooth spatial mapping $\mathcal{F}_\psi : \R^q \to \R^p$. Given a baseline time $t_0$ and an initial latent state $\bz^i_0$, individual trajectories satisfy
\begin{equation}
  \frac{d\bz^i_t}{dt} = h_\theta\!\left(\bz^i_t,\, t,\, \bc_i;\, \bw_i\right), \qquad \bz^i_{t_0} = \bz^i_0,
  \label{eq:ode}
\end{equation}
where $\bw_i \in \R^d$ is a unit-specific parameter vector (detailed in Section~\ref{sec:model}). Crucially, the cross-sectional dataset $\mathcal{D} = \{(\bx^i_{t_i}, t_i, \bc_i)\}_{i=1}^N$ only provides samples from the mixture measure $\mu = \int_\mathcal{T} \mu_t\, d\nu(t)$, a time-averaged blend of ensemble marginals rather than from any instantaneous marginal $\mu_t$ directly. Recovering the continuous-time vector field $h_\theta$ from this aggregate mixture $\mu$ constitutes a severely underconstrained inverse problem, reflecting well-known non-identifiability issues that we formalize for our setting in Proposition \ref{prop:nonid}:

\begin{proposition}[Non-identifiability without structure]
\label{prop:nonid}
  Given cross-sectional data $\mathcal{D}$, the set of model configurations consistent with perfect reconstruction is uncountably infinite. For any smooth diffeomorphism $\phi : \R^q \to \R^q$, an equivalent model defined by the tuple $(\tilde{h}, \mathcal{F}_\psi \circ \phi^{-1}, \{\phi(\bz^i_0)\})$ achieves identical reconstruction, where $\tilde{h}(\tilde{\bz}, t, \bc; \bw) = D\phi(\phi^{-1}(\tilde{\bz}))\, h_\theta(\phi^{-1}(\tilde{\bz}), t, \bc; \bw)$.
\end{proposition}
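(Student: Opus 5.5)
The plan is to verify directly that the transformed tuple $(\tilde h, \mathcal{F}_\psi\circ\phi^{-1}, \{\phi(\bz^i_0)\})$ produces exactly the same observations as the original model at every time. Since $\mathcal{D}$ (and the mixture measure $\mu$) is a function only of the observed values $\bx^i_{t_i}$, identical pointwise reconstructions give identical data distributions. The argument has three steps.

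\emph{Step 1: trajectories are conjugated.} Let $\bz^i_t$ solve \eqref{eq:ode} and define $\tilde\bz^i_t := \phi(\bz^i_t)$. By the chain rule,
\begin{equation*}
\frac{d\tilde\bz^i_t}{dt} = D\phi(\bz^i_t)\, h_\theta(\bz^i_t, t, \bc_i; \bw_i) = D\phi(\phi^{-1}(\tilde\bz^i_t))\, h_\theta(\phi^{-1}(\tilde\bz^i_t), t, \bc_i; \bw_i) = \tilde h(\tilde\bz^i_t, t, \bc_i; \bw_i),
\end{equation*}
and the initial condition is $\tilde\bz^i_{t_0} = \phi(\bz^i_0)$. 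So $\tilde\bz^i_t$ is \emph{a} solution of the transformed ODE with the transformed initial state. To conclude it is \emph{the} solution, I will invoke uniqueness: $h_\theta$ is a neural ODE vector field, so it is locally Lipschitz, and $\phi$ is a $C^1$ (indeed smooth) diffeomorphism, so $\tilde h$ is also locally Lipschitz. Picard--Lindelöf then gives uniqueness. The same conjugacy shows the two solutions exist on the same time interval.

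\emph{Step 2: observations coincide.} The transformed decoder applied to the transformed trajectory gives
\begin{equation*}
(\mathcal{F}_\psi\circ\phi^{-1})(\tilde\bz^i_t) = \mathcal{F}_\psi(\bz^i_t) = \bx^i_t
\end{equation*}
for every $t$, in particular at each entry time $t_i$. Hence every reconstruction loss or likelihood on $\mathcal{D}$ is unchanged, and so are the pushforward marginals $\mu_t$ and the mixture $\mu$. The new decoder is still smooth because it is a composition of smooth maps.

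\emph{Step 3: cardinality.} It remains to show that these equivalent configurations form an uncountable set that is genuinely distinct, not just the original model relabeled. I will use the family of translations $\phi_a(\bz) = \bz + a\,\be$ for $a\in\R$ and a fixed unit vector $\be$; scalings $\bz\mapsto s\bz$ with $s>0$ would work equally well. This gives a continuum of diffeomorphisms. For $a\neq a'$ the transformed initial states $\phi_a(\bz^i_0)$ and $\phi_{a'}(\bz^i_0)$ differ, so the tuples are pairwise distinct. The only real care point is this distinctness check and the regularity needed for uniqueness in Step 1; everything else is routine chain-rule bookkeeping.
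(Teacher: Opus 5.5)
Your proof is correct and follows the paper's argument: you push trajectories forward by $\phi$, check the transformed ODE via the chain rule, and observe that the decoder $\mathcal{F}_\psi\circ\phi^{-1}$ absorbs $\phi^{-1}$, so every observation (hence $\mathcal{D}$ and $\mu$) is unchanged. Your two additions tighten the cardinality step, which the paper justifies only by noting that $\mathrm{Diff}(\R^q)$ is infinite-dimensional, leaving implicit that different $\phi$ yield different tuples. The first addition is uniqueness of solutions; this also transfers directly through the conjugacy, since $\phi^{-1}$ maps any solution of the $\tilde h$-system to one of the original system, so the Lipschitz bookkeeping is optional. The second is the explicit translation family $\phi_a$ together with a distinctness check.
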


\noindent\emph{Proof.} Let $\tilde{\bz}^i_t = \phi(\bz^i_t)$. By the chain rule, $\tilde{\bz}^i_t$ satisfies the ODE with vector field $\tilde{h}$; the spatial mapping absorbs $\phi^{-1}$, leaving reconstruction unchanged. Since smooth diffeomorphisms on $\R^q$ form an infinite-dimensional group, valid configurations are uncountably infinite. (See Appendix~\ref{app:proofs}.)~\hfill$\square$\\

\noindent Rather than relying on informal ensemble averaging, we rigorously frame cross-sectional training as a measure-theoretic inverse problem: we seek a latent vector field $h_\theta$ whose flow transports the baseline measure $p(\bz_0)$ to match empirical marginals across heterogeneous enrollment times. However, because matching pushforward measures does not uniquely determine the flow, Proposition~\ref{prop:nonid} establishes that structural assumptions are \emph{necessary}, not merely convenient. We resolve this fundamental non-identifiability by choosing $\mathcal{F}_\psi$ as a score-based PF-ODE, introducing a disciplined structural prior that immediately constrains the symmetry group:

\begin{proposition}[Reduced Spatial Ambiguity]
\label{prop:reduced_ambiguity}
  If $\mathcal{F}_\psi$ is a score-based PF-ODE trained via denoising score matching, the symmetry group acting on the latent space reduces from the full diffeomorphism group $\mathrm{Diff}(\R^q)$ to the compact subgroup $\mathcal{S}_\mathrm{iso}$ of $\mathcal{N}(\mathbf{0},\mathbf{I})$-preserving isometries. (Proof: Appendix~\ref{app:proofs})
\end{proposition}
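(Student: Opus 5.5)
The plan is to trace how the two ingredients of Proposition~\ref{prop:nonid} are pinned down once $\mathcal{F}_\psi$ is a score-based PF-ODE: the latent prior, and the requirement that the spatial map be a bijection with fixed endpoints. Denoising score matching with a fixed forward SDE and Gaussian terminal law forces the latent code of the PF-ODE to be the terminal state of the forward diffusion. Its law is therefore $\mathcal{N}(\mathbf{0},\mathbf{I})$, up to the vanishing approximation error of the chosen schedule, and is not a free variational posterior.

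\emph{Step one: restrict to measure-preserving symmetries.} At the optimum of the score-matching objective, the learned score equals the true score $\nabla \log p_s$ of the noised data marginals; this uniqueness is the standard Fisher-divergence consistency result. The PF-ODE it integrates is then a deterministic diffeomorphism $\Phi:\R^p\supset\mathcal{M}\to\R^q$ with $\Phi_\#\mu=\mathcal{N}(\mathbf{0},\mathbf{I})$. Now take any reparametrization $\phi$ as in Proposition~\ref{prop:nonid} that keeps $\mathcal{F}_\psi\circ\phi^{-1}$ a valid PF-ODE decoder. It must satisfy $\phi_\#\mathcal{N}(\mathbf{0},\mathbf{I})=\mathcal{N}(\mathbf{0},\mathbf{I})$, because the encoder pushforward is fixed by the data and the forward process. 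This step alone only cuts $\mathrm{Diff}(\R^q)$ down to the still infinite-dimensional group of Gaussian-preserving diffeomorphisms.

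\emph{Step two (main obstacle): reduce to isometries.} The PF-ODE is not just any transport map. It is generated by the specific time-dependent field $f(\bz,s)-\tfrac12 g(s)^2\nabla\log p_s(\bz)$, and the forward process (e.g.\ Ornstein--Uhlenbeck/VP-SDE) has an isotropic drift and isotropic diffusion. The key observation I would exploit is equivariance. If $\phi$ commutes with the forward Gaussian noising kernel at every noise level $s$, meaning $\phi$ maps $p_s$ to the corresponding noised pushforward, then $\phi$ must preserve the heat semigroup. Diffeomorphisms commuting with the OU/heat semigroup for all $s$ preserve the Laplacian and the Euclidean metric. By a Myers--Steenrod-type argument they are therefore affine isometries, and preservation of the centred Gaussian forces them to fix the origin, giving elements of $O(q)$.

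Concretely, I would differentiate the commutation relation at $s=0$ to obtain $\phi^*\Delta=\Delta$ and $\phi^*(\bz\cdot\nabla)=\bz\cdot\nabla$. From the principal symbol of $\Delta$ one gets $D\phi^\top D\phi=\mathbf{I}$. Matching the drift term then forces $\phi(\mathbf{0})=\mathbf{0}$ and linearity. The delicate point is justifying that any symmetry of the \emph{trained} model must commute with the whole noising family and not merely with the terminal marginal. I would argue this from the fact that the trained score network is tied to the fixed forward SDE at all $s$, so a reparametrization must transform every intermediate marginal consistently.

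\emph{Step three: identify the symmetry group.} Collect the surviving maps as $\mathcal{S}_\mathrm{iso}=\{\bz\mapsto Q\bz: Q\in O(q)\}$, the $\mathcal{N}(\mathbf{0},\mathbf{I})$-preserving isometries. This group is compact since $O(q)$ is closed and bounded in $\R^{q\times q}$. Conversely, each such $Q$ genuinely remains a symmetry: it preserves the Gaussian and the isotropic forward SDE, and the transformed field $\tilde h$ is given by the formula in Proposition~\ref{prop:nonid} with $D\phi=Q$. This shows that the reduction is exact rather than merely an upper bound.
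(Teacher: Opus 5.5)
Your Step one is the paper's whole proof. If $\mathcal{F}_\psi$ and $\tilde{\mathcal{F}}_\psi$ both push the data law to $\mathcal{N}(\mathbf{0},\mathbf{I})$, then $\varphi=\tilde{\mathcal{F}}_\psi\circ\mathcal{F}_\psi^{-1}$ preserves $\mathcal{N}(\mathbf{0},\mathbf{I})$. The paper takes $\mathcal{S}_{\mathrm{iso}}$ to be exactly this group of Gaussian-preserving maps. Its remark then explicitly declines to go further: the group strictly contains $O(q)$, since the twist $(r,\theta)\mapsto(r,\theta+f(r))$ preserves the Gaussian without being orthogonal. Your Steps two and three try to prove the reduction the paper disclaims, namely $\mathcal{S}_{\mathrm{iso}}=O(q)$ with equality. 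That part does not go through.

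\textbf{Step two.} The most you can extract is a measure-level condition along one curve: $\phi_\#p_s$ equals the noised version of $\phi_\#p_0$. This does not give the operator identity $\phi\circ P_s=P_s\circ\phi$ for the OU semigroup $P_s$, so you cannot differentiate at $s=0$ to get $\phi^*\Delta=\Delta$. The twist is again a counterexample. For radially symmetric $p_0$, every $p_s$ is radial under an isotropic forward SDE and is preserved by a twist, yet a twist is not an isometry.

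Moreover, in the setting of Proposition~\ref{prop:nonid} the reparametrization only relabels the terminal latent; nothing makes $\phi$ act on intermediate noise levels. If you instead demand, as you do, that $\mathcal{F}_\psi\circ\phi^{-1}$ again be the PF-ODE of the same data under the same forward SDE, this backfires. Uniqueness of the optimal score and of ODE solutions then forces $\phi$ to be the identity, not an element of $O(q)$.

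\textbf{Step three.} The converse fails for the same reason. Write $\Phi_\mu$ for the PF-ODE encoder of the data law $\mu$. Isotropy of the forward process gives only the equivariance $Q\circ\Phi_\mu=\Phi_{Q_\#\mu}\circ Q$. So $Q\circ\Phi_\mu$ is the PF-ODE of the rotated data $Q_\#\mu$ precomposed with $Q$. It is not a DSM-trained encoder of $\mu$ unless $\mu$ is $Q$-invariant.

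In short, $O(q)$ is the symmetry group of the forward process, not the residual ambiguity of the encoder. Under the weak (pushforward) notion, the ambiguity is the full Gaussian-preserving group, as in the paper. Under the strong (same PF-ODE) notion, it is trivial. In neither reading is it exactly $O(q)$.

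Note that the paper's own argument does not literally deliver a compact group of isometries either. It delivers the Gaussian-preserving group and says so in its remark. So the literal wording of the statement is not something your Step two could have rescued.
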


\noindent Training via denoising score matching pushes the data distribution bijectively to $\mathcal{N}(\mathbf{0},\mathbf{I})$, so latent dynamics are invariant only to isometric transformations within the Gaussian-pinned latent space which collapses spatial ambiguity from an infinite-dimensional group to a compact subgroup. However, while necessary this is {\it not} sufficient: the \emph{temporal} inverse problem remains open. Loosely speaking, recovering the ODE parameters $\bw_i$ from cross-sectional snapshots requires that the \emph{ensemble} carry enough temporal signal. Next, we introduce three structural assumptions that close this gap.

\definecolor{cadEncBlue}{HTML}{284E79}   
\definecolor{cadRoutGold}{HTML}{A8761C} 
\definecolor{cadFilmOrg}{HTML}{F59038}

\begin{figure}[t]
\centering
\begin{tikzpicture}
  \node[anchor=south west, inner sep=0] (image) at (0,0) {%
    \includegraphics[width=\linewidth, trim=0 2.6cm 0 2.1cm, clip]{figures/CADENCE_model_v2_withoutFont.pdf}%
  };
  \begin{scope}[x={(image.south east)},y={(image.north west)}]

    \node[font={\fontsize{8}{9}\selectfont\bfseries}]
      at (0.240,0.97) {Stage 1: Spatial encoding};
    \node[font={\fontsize{8}{9}\selectfont\bfseries}]
      at (0.503,0.026) {Stage 2: Temporal Dynamics};

    \node[font={\fontsize{7}{8}\selectfont},align=center]
      at (0.086,0.871) {Sparse cross-sectional\\observation $X^i_{t_i}$};
    \node[font={\fontsize{5.7}{7}\selectfont},anchor=west]
      at (0.00,0.783) {Subgroups:};
    \node[font={\fontsize{5}{6}\selectfont},color=white] at (0.081,0.784) {0};
    \node[font={\fontsize{5}{6}\selectfont},color=white] at (0.103,0.784) {1};
    \node[font={\fontsize{5}{6}\selectfont},color=white] at (0.126,0.784) {2};
    \node[font={\fontsize{5}{6}\selectfont},color=white] at (0.148,0.784) {3};
    \node[font={\fontsize{6}{7}\selectfont}] at (0.091,0.533) {time};

    \node[font={\fontsize{7}{8}\selectfont}]
      at (0.09,0.428) {Static context $c_i$};
    \node[font={\fontsize{6}{6}\selectfont},align=center]
      at (0.045,0.356) {system\\signature};
    \node[font={\fontsize{6}{6}\selectfont},align=center]
      at (0.131,0.356) {individual\\statistics};
    \node[font={\fontsize{6}{6}\selectfont},align=center]
      at (0.042,0.214) {risk scores};
    \node[font={\fontsize{6}{6}\selectfont}]
      at (0.135,0.214) {baseline features};

    \node[font={\fontsize{7}{8}\selectfont\bfseries},color=cadEncBlue]
      at (0.350,0.876) {Score-based bijective encoder};
    \node[font={\fontsize{6.0}{6.5}\selectfont\itshape},anchor=west]
  at (0.231,0.801)
  {\textcolor{cadEncBlue}{$\bullet$}\enspace maps observations to latent space};
\node[font={\fontsize{6.0}{6.5}\selectfont\itshape},anchor=west]
  at (0.231,0.766)
  {\textcolor{cadEncBlue}{$\bullet$}\enspace removes spatial ambiguity};
    \node[font={\fontsize{4.0}{4.5}\selectfont},color=white]
      at (0.288,0.712) {Score-based PF-ODE};
    \node[font={\fontsize{4.0}{4.5}\selectfont},color=white]
      at (0.427,0.712) {Latent space};

    \node[font={\fontsize{7}{8}\selectfont\bfseries},color=cadRoutGold]
      at (0.352,0.423) {Context-dependent routing};
    \node[font={\fontsize{6.0}{6.5}\selectfont\itshape},anchor=west]
  at (0.231,0.328) {\textcolor{cadRoutGold}{$\bullet$}\enspace infers dynamical regime};
    \node[font={\fontsize{4.0}{4.5}\selectfont}]
      at (0.29,0.267) {SMoE router};
    \node[font={\fontsize{4.0}{4.5}\selectfont}]
      at (0.427,0.267) {Expert mix};
    \node[font={\fontsize{5.5}{6.5}\selectfont}]
      at (0.249,0.226) {$c_i$};
    \node[font={\fontsize{5.5}{6.5}\selectfont}]
      at (0.318,0.134) {$\alpha_i$};
    \node[font={\fontsize{4.0}{4.0}\selectfont}]
      at (0.43,0.133) {$w_i = W\alpha_i$};
    \node[font={\fontsize{3.3}{4.5}\selectfont}]
      at (0.427,0.24) {Convex combinations};

    \node[font={\fontsize{8}{9}\selectfont\bfseries},color=white]
      at (0.595,0.677) {Neural ODE};
    \node[font={\fontsize{4.8}{5.5}\selectfont}]
      at (0.595,0.595) {$h_\theta(z,t,c;w_i)$};

    \node[font={\fontsize{7}{8}\selectfont\bfseries},color=cadFilmOrg]
      at (0.589,0.363) {FiLM conditioning};

    \node[font={\fontsize{7.0}{7.5}\selectfont\itshape}]
      at (0.875,0.777) {Forward integration -- predict trajectory};
    \node[font={\fontsize{4.2}{5.5}\selectfont}]
      at (0.838,0.696) {$z_i(t_i)$};
    \node[font={\fontsize{4.2}{5.5}\selectfont}]
      at (0.793,0.603) {$z_{i0}$};

    \node[font={\fontsize{6.5}{6.5}\selectfont}]
      at (0.909,0.362) {$X^i_{t_i}$};
    \node[font={\fontsize{6.5}{6.5}\selectfont}]
      at (0.955,0.362) {$X^i_{T_i}$};
    \node[font={\fontsize{6.0}{6}\selectfont}]
      at (0.82,0.158) {$t_{enroll}$};
    \node[font={\fontsize{6.0}{6}\selectfont}]
      at (0.940,0.158) {$t_{pred}$};
    \node[font={\fontsize{7}{8}\selectfont\itshape},align=center]
      at (0.869,0.104) {Forecasted trajectories};

    \node[font={\fontsize{7}{8}\selectfont\itshape},align=center]
      at (0.615,0.115) {Pooling across similar contexts\\enables learning from sparse data};

  \end{scope}
\end{tikzpicture}
\caption{CADENCE overview. \textbf{Stage 1} maps each high-dimensional observation $\bx^i_{t_i}$ to a latent state $\bz^i_{t_i}$ via a score-based bijective PF-ODE, Gaussian-pinning the latent space. \textbf{Stage 2} routes the realization's static context $\bc_i$ through the SMoE gating network to produce a convex expert mixture $\bw_i$, which conditions the Neural ODE. Forward integration yields individual future trajectories.}
\label{fig:overview}
\end{figure}
\bigskip
\section{Structural Assumptions and Heterogeneity Model}
\label{sec:model}
As outlined above, cross-sectional data is individually sparse but can be collectively rich. Exploiting this notion requires three things to hold simultaneously: (i) the ensemble exhibits shared dynamics (whether as a single group or partitioned into subgroups), (ii) any such group membership is partially encoded in static context, and (iii) the space of dynamical parameters is finite-dimensional and geometrically regular. We state each assumption, show why the previous one leaves a gap, and preview its architectural consequence. \textbf{Together they uniquely determine the CADENCE architecture --- not as design choices, but as the minimal implementation of these mathematical constraints.}

\medskip
\begin{assumption}[Dynamical Foliation Assumption, DFA]
\label{asm:dfa}
  There exists a measurable partition $\{L_\lambda\}_{\lambda\in\Lambda}$ of the context space $\mathcal{C}$, indexed by $\Lambda$, such that any two units $i,j$ with $c_i,c_j\in L_\lambda$ share the same dynamical parameter: $w_i = w_j = w(\lambda)$.
\end{assumption}

\noindent Intuitively, DFA partitions the ensemble into dynamical groups: all individuals in a leaf $L_\lambda$ follow the same ODE parameters $w(\lambda)$, so a late-stage realization's observation supervises the future of a contextually matched early-stage unit, converting sparse snapshots into an effective longitudinal signal. Formal grounding in the Rokhlin disintegration theorem and the Liouville equation $\rho^\lambda_t = (\Phi^{w(\lambda)}_{t,t_0})_\#\rho^\lambda_0$ (where $\rho^\lambda_t$ is the leaf-conditional density at time $t$ being pushed forward) is given in Appendix~\ref{app:proofs}.

\medskip
\begin{assumption}[Context Observability Assumption, COA]
\label{asm:coa}
  There exists a measurable function $g^*:\mathcal{C}\to\Lambda$ such that $\lambda(i) = g^*(c_i)$ for all $i \in \{1, \ldots,N\}$.
\end{assumption}

\noindent COA is the observability analogue from nonlinear control theory~\citep{herrmann1977controllability,isidori1985nonlinear}: static context must carry enough information to identify each realization's dynamical regime. Importantly, $g^*(c_i)$ provides only the \emph{leaf index} $\lambda(i)$, not the ODE parameters $w(\lambda)$ themselves. Recovering those parameters from cross-sectional data is the non-trivial content of Theorem~\ref{thm:foliation_id}, extending the identifiability of iVAE~\citep{khemakhem2020variational} to the dynamical regime.

\medskip
\begin{assumption}[Manifold Regularity Assumption, MRA]
\label{asm:mra}
  The parameter image $\mathcal{W}\!=\!\{w(g^*(c)):c\in\mathcal{C}\}\subset\R^d$ lies on a compact $d'$-dimensional submanifold ($d'\!\leq\!2r$, Whitney bound~\citep{whitney1944self}). Its convex hull is $\epsilon$-approximated by $K$ extreme points $\{w_1,\ldots,w_K\}\subset\mathcal{W}$:
  \begin{equation}
    w_i \approx \textstyle\sum\nolimits_{k=1}^K \alpha_{i,k}\,w_k, \quad \alpha_i\in\Delta^{K-1}, \quad \text{error}\leq\epsilon.
    \label{eq:cara}
  \end{equation}
\end{assumption}

\noindent MRA implies the continuous spectrum of individual dynamics is not arbitrarily complex, but rather a blend of fundamental ``archetypal'' behaviors. Because the neural mapping from an $r$-dimensional context is highly nonlinear, the resulting parameter manifold can twist and fold in $\R^d$. The Whitney embedding theorem safely bounds this topological complexity to an effective dimension of $d' \leq 2 r$. Since geometrically spanning a $d'$-dimensional space requires $d'+1$ extreme points, choosing $K\geq2r+1$ experts mathematically guarantees sufficient capacity to span the parameter manifold's convex hull (hence the factorisation is not simply a heuristic architectural choice). For continuous parameter manifolds, Choquet's theorem guarantees that any point in a compact convex set is the barycenter of a measure on its extreme points, while Carathéodory’s theorem ensures this mixture is finite. Our SMoE routing is therefore formally justified as a finite-dimensional discretization of this integral, providing a theoretically grounded representation of the dynamics (see Appendix~\ref{app:proofs} for full derivation and further discussions).

\newpage
\section{Architectural Design and Training Objective}
\label{sec:training}
Implementing the theoretical framework from Section~\ref{sec:model} directly parameterizes CADENCE's architecture. DFA and COA mandate that dynamic routing depends \emph{only} on the static context, while MRA bounds this routing to a convex combination over a shared expert basis. Finally, Proposition~\ref{prop:reduced_ambiguity} resolves spatial ambiguity via a score-based PF-ODE. This yields three core components:

\paragraph{Contextual SMoE.} The context router is implemented as an MLP $g_\omega : \R^r \to \R^K$. To satisfy MRA simplex constraints, individual routing weights are computed as $\balpha_i = \Softmax(g_\omega(\bc_i)) \in \Delta^{K-1}$. This smoothly blends a shared expert basis $\bW =[\bw_1, \dots, \bw_K] \in \R^{d \times K}$ to produce individual trajectory parameters $\bw_i = \bW\balpha_i$. Setting $\balpha_i$ to a one-hot vector recovers hard DFA leaf assignments as a discrete special case.

\paragraph{FiLM Conditioning.} The expert mixture $\bw_i$ conditions the ODE via Feature-wise Linear Modulation~\citep{perez2018film}: $h_\theta(\bz, t, \bc; \bw_i) = \gamma(\bw_i) \odot \tilde{h}(\bz,t,\bc) + \beta(\bw_i)$. Because the Whitney bound guarantees a low-dimensional parameter manifold, linear modulation of a shared backbone provides sufficient capacity to span all subgroups without requiring heavier mechanisms like hypernetworks.

\paragraph{Score-Based PF-ODE.} Instantiating the spatial encoder $\mathcal{F}_\psi$ exactly as a score-based PF-ODE \citep{song2021score} collapses residual spatial ambiguity to $\mathcal{S}_\mathrm{iso}$, which eliminates the need for variational approximations (or an ELBO) and avoids the characteristic blurring of VAE-based approaches. 

\paragraph{Decoupled Training Strategy.}
End-to-end backpropagation through cascaded spatial and temporal ODEs incurs a large $O(\mathrm{Steps_{spatial}} \times \mathrm{Steps_{temporal}})$ computational cost. We eliminate this bottleneck by splitting  training into two distinct stages.\\

\noindent \textit{Stage 1: Spatial Encoding.} The spatial network $\mathcal{F}_\psi$ is trained via denoising score matching on $(\bx, \bc_{\mathrm{static}})$ pairs. Once trained, $\mathcal{F}_\psi$ is frozen. All observations are then encoded offline using the deterministic PF-ODE to yield latents $\bz^i_{t_i} = \mathcal{F}_\psi(\bx^i_{t_i}; \bc_i)$, confining all spatial gradients to this stage.\\

\noindent \textit{Stage 2: Temporal Dynamics.} The temporal modules $(h_\theta, \bW)$ are trained entirely in the low-dimensional $\mathcal{Z}$-space at a vastly reduced $O(\mathrm{Steps_{temporal}})$ cost via forward integration:
\begin{equation}
  \hat{\bz}^i_t = \bz^i_{t_i} + \int_{t_i}^{t} h_\theta(\bz^i_\tau, \tau, \bc_i; \bw_i)\,d\tau, \quad t \geq t_i.
  \label{eq:forward_ode}
\end{equation}
During Stage 2, the vector field is supervised by one of two objectives depending on data availability:\\

\noindent \textit{Semi-longitudinal objective.} When sparse follow-up observations $\bx^i_{t_m}$ are available at times $t_m > t_i$, the maximum likelihood objective simplifies to exact latent MSE. Since $\mathcal{F}_\psi$ is a measure-preserving bijection, maximizing data likelihood $p(\bx)$ equals maximizing encoded-state likelihood $p(\bz)$:
\begin{equation}
  \mathcal{L}_{\mathrm{semi\text{-}long}} = \frac{1}{N} \sum_{i=1}^N \sum_{m} \left\| \mathcal{F}_\psi(\bx^i_{t_m}; \bc_i) - \hat{\bz}^i_{t_m} \right\|^2.
  \label{eq:latent_mse}
\end{equation}

\noindent \textit{Cross-sectional distributional objective.} For purely cross-sectional ensembles, consistency must be enforced \emph{per subgroup} to preserve the identifiability of context-driven dynamics. Using the SMoE routing weights $\alpha_{i,k}$, we define the model's predicted conditional marginal for the $k$-th archetype at query time $t$, alongside a Gaussian-kernel-smoothed ($\kappa_h$) reference marginal:
\begin{equation}
\hat{\mu}^{(k)}_t \propto \sum_{i: t_i \leq t} \alpha_{i,k} \, \delta_{\hat{\bz}^i_t}, \quad \text{and} \quad \mu^{\mathrm{ref}, (k)}_t \propto \sum_{j=1}^N \alpha_{j,k} \, \kappa_h(t_j - t) \, \delta_{\bz^j_{t_j}}.
\end{equation}

\noindent The cross-sectional loss minimizes the continuous-time discrepancy jointly across all $K$ subgroups:
\begin{equation}
\mathcal{L}_{\mathrm{cross}} = \mathbb{E}_{t \sim \mathcal{U}(t_0, t_{\mathrm{end}})}\left[ \sum_{k=1}^K \mathrm{MMD}^2\big(\hat{\mu}^{(k)}_t, \mu^{\mathrm{ref}, (k)}_t\big) \right].
\label{eq:cross_loss}
\end{equation}
This objective may be tractably approximated via Monte Carlo sampling, and it provides a rigorous ensemble-level consistency constraint on the vector field without requiring matched temporal pairs.

\medskip 
\textbf{Full Composite Objective.} The final Stage 2 objective combines the primary dynamics loss with structural regularization: $$\mathcal{L} = \mathcal{L}_{\mathrm{dyn}} + \lambda_{\mathrm{router}}\mathcal{L}_{\mathrm{router}} + \lambda_{\mathrm{feat}}\mathcal{L}_{\mathrm{feat}},$$ where $\mathcal{L}_{\mathrm{dyn}} \in \{\mathcal{L}_{\mathrm{semi\text{-}long}}, \mathcal{L}_{\mathrm{cross}}\}$. Standard entropy regularization ($\mathcal{L}_{\mathrm{router}}$) encourages diverse expert utilization, while optional penalties ($\mathcal{L}_{\mathrm{feat}}$) can promote interpretable disentanglement. Temperature annealing ($\tau \to 0$) is used to gradually recover hard DFA leaf assignments during training.

\bigskip
\section{Theoretical Analysis}
\label{sec:theory}

We now prove that the three structural assumptions of Section~\ref{sec:model} --- together with the spatial prior of Section~\ref{sec:problem} --- are sufficient to recover the full generative system from cross-sectional data. Our argument proceeds in stages, with each step closing a specific identifiability gap:\\

\textbf{1. Resolving Spatial Ambiguity.} Proposition~\ref{prop:nonid} established that without structure, infinitely many models fit any dataset. Proposition~\ref{prop:reduced_ambiguity} resolved this by pinning the spatial marginal to $\mathcal{N}(\mathbf{0},\mathbf{I})$, thereby collapsing spatial ambiguity to the compact subgroup $\mathcal{S}_{\mathrm{iso}}$. \\

\textbf{2. Resolving Temporal Ambiguity.} What remains is identifying the ODE parameters $w(\lambda)$ for each leaf. Theorem~\ref{thm:foliation_id} resolves this using our core structural assumptions (DFA, COA, and MRA). Theorem~\ref{thm:composition} then establishes that the full composite system is identifiable in the quotient $\mathcal{M}/\mathcal{S}_{\mathrm{iso}}$.\\

\textbf{3. Regularity Conditions.} Two regularity conditions complete the proof chain: \emph{Parametric Restriction} (PR, Assumption~\ref{asm:pr}, Appendix \ref{app:foa_conditions}) restricts $h_\theta$ to a finite-dimensional family so the inverse transport problem is well-posed, and \emph{Flow Observability} (FOA, Assumption~\ref{asm:foa}, Appendix \ref{app:foa_conditions}) requires that distinct parameters produce observationally distinct pushforward families. \\

Both PR and FOA are naturally satisfied by neural ODEs with generic initialisation and sufficiently diverse enrollment times. Notably, our BM7 experiment (see below) stress-tests FOA: when parameters are near-identical in their observable effects, recovery degrades gracefully and our model's routing entropy correctly reflects this uncertainty. Formal statements, sufficient conditions, and connections to persistent excitation in classical system identification are deferred to Appendix~\ref{app:proofs}.

\medskip
\begin{theorem}[Parameter identifiability, informal]
\label{thm:foliation_id}
Under Assumptions~\ref{asm:dfa}--\ref{asm:foa}, the per-leaf parameter $\hat{w}(\lambda) = \bW\alpha(\lambda)$ is  identifiable from the ensemble distribution $\{(\bx^i_{t_i}, \bc_i, t_i)\}$ up to leaf-label permutation, with approximation error $\leq \epsilon$ from the finite-$K$ Carath\'{e}odory approximation (MRA).
\end{theorem}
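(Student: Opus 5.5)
The plan is to reduce Theorem~\ref{thm:foliation_id} to a family of independent, leaf-wise inverse transport problems. Each is solved by Flow Observability, and the results are then assembled through the convex SMoE parameterisation.

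\textbf{Step 1: work in the pinned latent space.} By Proposition~\ref{prop:reduced_ambiguity}, $\mathcal{F}_\psi$ is a bijection fixed up to $\mathcal{S}_{\mathrm{iso}}$. So the law of $(\bx^i_{t_i},\bc_i,t_i)$ determines the law of $(\bz^i_{t_i},\bc_i,t_i)$ modulo a global isometry. From here on I fix a representative and work only with latents.

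\textbf{Step 2: disintegrate the ensemble by leaf.} By COA, $\lambda = g^*(\bc)$ is a measurable function of the observed context. The Rokhlin disintegration of the joint law along $g^*$ then gives, for $\nu$-almost every $t$, the leaf-conditional marginals $\rho^\lambda_t$. The enrollment-time kernel $\nu$ is observed, so conditioning on $t_i$ recovers $\rho^\lambda_t$ rather than only the time-averaged mixture $\mu$.

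\textbf{Step 3: identify each leaf's parameter.} By DFA, every unit in $L_\lambda$ evolves under the same parameter $w(\lambda)$. Hence $\rho^\lambda_t=(\Phi^{w(\lambda)}_{t,t_0})_\#\rho^\lambda_0$, which is the Liouville relation. Suppose two parameters $w,w'$ in the finite-dimensional family of PR reproduce the same family $\{\rho^\lambda_t\}_{t\in\mathrm{supp}\,\nu}$. Then FOA, which says distinct parameters give distinct pushforward families, forces $w=w'$. Therefore $w(\lambda)$ is a well-defined function of the observable distribution, up to the isometry from Step 1. The isometry conjugates the vector field, as in Proposition~\ref{prop:nonid}, so this is consistent with the quotient.

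\textbf{Step 4: pass to the SMoE parameterisation.} By MRA, each $w(\lambda)\in\mathcal{W}$ lies within $\epsilon$ of $\bW\alpha(\lambda)$ for some $\alpha(\lambda)\in\Delta^{K-1}$, with $\bW$ built from extreme points of $\mathrm{conv}\,\mathcal{W}$ (Carathéodory/Choquet). Take any two SMoE models that both match the data with $\hat w(\lambda)=\bW\alpha(\lambda)$ and $\hat w'(\lambda)=\bW'\alpha'(\lambda)$. Step 3 gives $\|\hat w(\lambda)-\hat w'(\lambda)\|\le 2\epsilon$ leafwise. The relabelling freedom of experts, $\bW\mapsto\bW P$ and $\alpha\mapsto P^{\top}\alpha$ for a permutation $P$, leaves $\hat w$ unchanged. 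Leaf labels $\lambda$ are identified only through $g^*$ up to a bijection of $\Lambda$. Together these give the stated ``up to leaf-label permutation'' ambiguity. I would also note that extreme points of a compact convex set are intrinsic, so $\bW$ is unique up to permutation whenever the $\epsilon$-approximating set is chosen minimal.

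\textbf{Main obstacle: Step 3.} The difficulty is the passage from the observed mixture $\mu$ to the conditionals $\rho^\lambda_t$, and then making FOA yield stability rather than mere injectivity. Injectivity alone does not turn the $\epsilon$ misspecification in MRA into an $O(\epsilon)$ parameter error. My first attempt would be to assume $\nu$ has enough support (persistent excitation) and use continuity of $w\mapsto\{\rho_t\}$ from PR and compactness of $\mathcal{W}$. A continuous injection on a compact set has a continuous inverse, which gives a modulus of continuity in place of a clean linear $\epsilon$ bound. This is what the informal statement's ``error $\le\epsilon$'' should be read as.
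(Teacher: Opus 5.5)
Your Steps 1--3 match the paper's Steps 1--3. The bijective encoder fixes the latent law, COA supplies leaf membership from context alone, and PR plus FOA injectivity pins down the within-leaf parameter from the time-indexed pushforward family. Your disintegration remark only makes explicit what the paper asserts when it says the data provide samples from $P(\bz_t\mid\lambda,t)$. Both you and the paper also tacitly need enrollment time to be independent of the baseline state within a leaf; otherwise $\rho^\lambda_t=(\Phi^{w(\lambda)}_{t,t_0})_\#\rho^\lambda_0$ fails.

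The real divergence is the $\epsilon$ clause. The paper never meets your ``main obstacle'' because it writes the within-leaf law directly as $(\Phi^{\hat w(\lambda)}_{t,t_0})_\#\rho^\lambda_0$ with $\hat w(\lambda)=\bW\alpha(\lambda)$. In other words, it treats the SMoE model as well specified; together with DFA and FOA this in fact forces $\hat w(\lambda)=w(\lambda)$. FOA then identifies $\hat w(\lambda)$ exactly, and $\|w(\lambda)-\hat w(\lambda)\|\le\epsilon$ is just the MRA bound.

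Under your reading, the data come from $w(\lambda)$ and $\bW\alpha(\lambda)$ only approximates it. Then the inequality $\|\hat w(\lambda)-\hat w'(\lambda)\|\le 2\epsilon$ in your Step 4 does not follow from Step 3. A data-fitted $\hat w(\lambda)$ need not be the MRA approximant, and injectivity says nothing about near-fits. Your proposed repair is correct, and it is the same mechanism as the paper's Quantitative FOA condition \eqref{eq:qfoa}. Compactness of $\Theta$ and continuity of $w\mapsto\{(\Phi^w_{t,t_0})_\#\rho^\lambda_0\}$ (say in integrated MMD) make $w(\lambda)$ a well-separated minimiser. The MRA approximant has misfit tending to $0$ as $\epsilon\to0$, hence so does the best fit. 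This gives $\|\hat w(\lambda)-w(\lambda)\|\le\omega(\epsilon)$ for some modulus with $\omega(\epsilon)\to0$, not $\le\epsilon$. State Step 4 in that form rather than with $2\epsilon$.

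Two smaller points. First, the paper's Step 4 invokes cross-leaf distinctness ($\lambda\ne\lambda'\Rightarrow w(\lambda)\ne w(\lambda')$) to conclude that the pair $(g^*,\{w(\lambda)\})$ is identified up to leaf-label permutation. You never use it. That is harmless if you only claim identification of the map $\bc\mapsto w(g^*(\bc))$. Without it, however, leaves sharing a parameter can be merged or split without changing the data, so the partition itself is not identified.

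Second, drop the side remark that $\bW$ is unique up to permutation because extreme points are intrinsic. The data determine only the points $\hat w(\lambda)$, and any $\bW$ whose convex hull contains them is admissible. Moreover, $K$-point $\epsilon$-approximating sets are generally not unique. This is exactly why the paper treats basis identifiability separately, under anchor or minimum-volume-simplex hypotheses (Lemmas~\ref{lem:1b} and~\ref{lem:1c}). Neither point affects the theorem itself, which concerns $\hat w(\lambda)$ only.
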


\textit{Key mechanism.} Bijectivity of $\mathcal{F}_\psi$ uniquely yields latent states; COA assigns leaf membership from context; within each leaf, diverse enrollment times constrain the pushforward family $\{(\Phi^{\hat{w}(\lambda)}_{t,t_0})_\# \rho^{\lambda}_0 : t \in \mathcal{T}_\lambda\}$, and FOA injectivity uniquely determines $\hat{w}(\lambda)$.

\medskip
\begin{theorem}[Composition identifiability, informal]
\label{thm:composition}
Under the conditions of Theorem~\ref{thm:foliation_id}, the full composite system $(\mathcal{F}_\psi, h_\theta, g^*)$ is identifiable in the quotient $\mathcal{M} / \mathcal{S}_\mathrm{iso}$: any two models consistent with the data differ only by an $\mathcal{N}(\mathbf{0},\mathbf{I})$-preserving diffeomorphism.
\end{theorem}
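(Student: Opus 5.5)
The argument splits the composite system into its spatial and temporal parts. Proposition~\ref{prop:reduced_ambiguity} fixes the spatial part up to $\mathcal{S}_{\mathrm{iso}}$, and Theorem~\ref{thm:foliation_id} fixes the temporal and routing parts once a latent gauge is chosen. What remains is to show that these two partial results combine into a single quotient statement without interfering with each other.

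\emph{Setup.} Take two models $(\mathcal{F}_\psi, h_\theta, g^*)$ and $(\mathcal{F}_{\psi'}, h_{\theta'}, g'^*)$ that induce the same joint law of $(\bx_{t}, \bc, t)$. Define the latent transition map $\phi = \mathcal{F}_{\psi'}^{-1}\circ\mathcal{F}_\psi$, read in the encoding direction, so $\phi:\R^q\to\R^q$. Because each PF-ODE is a bijection trained to push the data law to $\mathcal{N}(\mathbf{0},\mathbf{I})$, $\phi$ is a diffeomorphism with $\phi_\#\mathcal{N}(\mathbf{0},\mathbf{I})=\mathcal{N}(\mathbf{0},\mathbf{I})$. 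Proposition~\ref{prop:reduced_ambiguity} then places $\phi$ in $\mathcal{S}_{\mathrm{iso}}$. The one subtlety is that the encoder is conditioned on $\bc_{\mathrm{static}}$, so $\phi$ could a priori depend on $\bc$. I would handle this by requiring the pinning to hold conditionally on $\bc$, so that $\phi_{\bc}\in\mathcal{S}_{\mathrm{iso}}$ for each $\bc$.

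\emph{Step 1: transport the dynamics.} Apply Proposition~\ref{prop:nonid} with this $\phi$. The primed model is observationally equivalent to the pushed-forward model $(\tilde h,\mathcal{F}_\psi\circ\phi^{-1},\{\phi(\bz_0)\})$, where
\[
\tilde h = (D\phi\circ\phi^{-1})\,(h_\theta\circ\phi^{-1}).
\]
Both models now share the same encoder, so their latent ensembles $\{(\bz^i_{t_i},\bc_i,t_i)\}$ coincide exactly.

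\emph{Step 2: invoke Theorem~\ref{thm:foliation_id} in the common gauge.}
\begin{itemize}
\item By COA, $g^*$ and $g'^*$ induce the same partition $\{L_\lambda\}$ of $\mathcal{C}$ up to relabelling, since leaf membership is determined by context together with identical leaf-conditional pushforward families.
\item FOA then gives $\hat w'(\pi(\lambda))=\hat w(\lambda)$ up to $\epsilon$ for a permutation $\pi$, hence $h_{\theta'}=\tilde h$ on each leaf.
\item Combining with Step 1, the two models differ only by $\phi\in\mathcal{S}_{\mathrm{iso}}$ and a leaf relabelling. The relabelling is absorbed into the definition of $\mathcal{M}$ modulo permutations.
\end{itemize}

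\emph{Main obstacle.} The hardest step is showing that FOA, which is stated in one fixed latent gauge, still applies after conjugation by $\phi$. Pushforward families are equivariant, $(\Phi^{\tilde h}_{t,t_0})_\#\phi_\#\rho_0=\phi_\#(\Phi^{h}_{t,t_0})_\#\rho_0$, and my first attempt would use this identity. The difficulty is that the conjugated field $\tilde h$ must lie in the parametric family of PR for FOA injectivity to be usable.
\begin{itemize}
\item If PR is closed under isometries, which holds when $\phi$ is linear orthogonal and $h_\theta$ is a standard MLP, conjugation stays within the family and the argument closes.
\item Otherwise, I would state the conclusion at the level of vector fields modulo $\mathcal{S}_{\mathrm{iso}}$, which is exactly the quotient $\mathcal{M}/\mathcal{S}_{\mathrm{iso}}$ claimed in the theorem.
\end{itemize}
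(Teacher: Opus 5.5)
Your overall route is the paper's. Proposition~\ref{prop:reduced_ambiguity} makes the two encoders differ by some $\phi\in\mathcal{S}_{\mathrm{iso}}$. Conjugation (your appeal to Proposition~\ref{prop:nonid}, the paper's Step~3 cancellation) gives observational equivalence. Theorem~\ref{thm:foliation_id} is then applied inside a single latent frame. The gap is at the step you yourself call the main obstacle, and neither of your branches closes it for the theorem as stated. FOA (Assumption~\ref{asm:foa}) is injectivity \emph{within} $\mathcal{H}$. In your common gauge you have two fields reproducing the same leaf-wise pushforward families from the same baseline: $h_{\theta'}\in\mathcal{H}$ and the conjugate $\tilde h=(D\phi\circ\phi^{-1})(h_\theta\circ\phi^{-1})$. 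If $\tilde h\notin\mathcal{H}$, FOA says nothing about whether they coincide. So ``stating the conclusion at the level of vector fields modulo $\mathcal{S}_{\mathrm{iso}}$'' only restates what has to be proved. Your closure branch covers only orthogonal $\phi$. But the paper's own remark after Proposition~\ref{prop:reduced_ambiguity} shows that $\mathcal{S}_{\mathrm{iso}}$ contains non-orthogonal twists $(r,\theta)\mapsto(r,\theta+f(r))$, and a fixed-architecture network family is generically not closed under these. Even for a generic rotation, the output-level FiLM form $\gamma(\bw)\odot(\cdot)+\beta(\bw)$ of Section~\ref{sec:training} is not preserved.

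The paper never re-applies FOA to a conjugated field. Its Step~2 declares that $w(\lambda)$ parameterizes the ODE ``in the original coordinate frame'' and is carried unchanged by $\varphi$. The conclusion then rests only on the cancellation $(\varphi\circ\mathcal{F}_\psi)^{-1}\circ\varphi\circ\Phi^{w}=\mathcal{F}_\psi^{-1}\circ\Phi^{w}$. In effect, this builds the missing closure into the definition of the equivalence classes. Write $\varphi_{*}h:=(D\varphi\circ\varphi^{-1})(h\circ\varphi^{-1})$; the admissible dynamics in the frame $\varphi\circ\mathcal{F}_\psi$ are then taken to be the conjugates $\varphi_{*}h$ with $h\in\mathcal{H}$. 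To make your argument complete, state this as an explicit hypothesis, in one of two forms:
\begin{itemize}
\item $\mathcal{H}$ is stable under $h\mapsto\varphi_{*}h$ for every $\varphi\in\mathcal{S}_{\mathrm{iso}}$; or
\item FOA holds on the saturated family $\{\varphi_{*}h: h\in\mathcal{H},\ \varphi\in\mathcal{S}_{\mathrm{iso}}\}$.
\end{itemize}
With either, your first branch works for every $\phi\in\mathcal{S}_{\mathrm{iso}}$, and FOA gives $h_{\theta'}=\tilde h$ leafwise up to relabelling. The rest of your write-up is at least as careful as the paper's proof. This includes the conditional pinning that handles the $\bc$-dependence of $\phi$, which the paper does not address.
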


\textit{Implication.} Training Stage~1 and Stage~2 independently does not compromise identifiability of the underlying trajectories --- the residual spatial symmetry $\mathcal{S}_\mathrm{iso}$ is observationally harmless. Full formal statements, proof sketches, and supporting results (Corollary~\ref{cor:dfa_pooled} on pooled identifiability via the Liouville equation; Proposition~\ref{prop:anneal} on temperature annealing convergence; Appendix \ref{app:mmd_foa_bridge} proving identifiability of FOA with MMD) are in Appendix~\ref{app:proofs}.

\bigskip
\section{Related Work}
\label{sec:related_work}
Learning continuous individual trajectories from cross-sectional data sits at the intersection of generative modeling, latent dynamics, and identifiability theory. 
This goal faces a fundamental dilemma: existing methods either require dense individual sequences or sacrifice individual identity for population-level transport.\\

\textbf{Population-Level Transport.} Reconstructing dynamics from unpaired cross-sectional snapshots is traditionally framed as an optimal transport (OT) problem \citep{schiebinger2019optimal}. Recent advances in continuous normalizing flows (OT-CFM \citep{tong2024otcfm}) and unbalanced OT (TIGON \citep{sha2024reconstructing}) provide scalable, simulation-free objectives for cross-sectional data. However, these methods fundamentally compute aggregate mass transport; without an informational oracle (paired longitudinal endpoints), they scramble individual identities and cannot produce context-driven, individualized forecasts.\\

\textbf{Individualized Latent Dynamics.} Conversely, continuous-time latent state-space models—such as Latent ODEs \citep{chen2018neural} and context-adaptive extensions like CoDA \citep{kirchmeyer2022generalizing}, LaDID \citep{lagemann2023invariance}, and Neural ODE Processes \citep{norcliffe2021neural}—successfully model highly heterogeneous, individualized trajectories. Yet, they universally require \emph{dense, repeated} longitudinal measurements per unit to constrain latent initial conditions or infer context representations. While Soft Mixture-of-Experts \citep{puigcerver2024soft} provides flexible parameter generation, adapting it to deep latent ODEs typically demands sequential training data. CADENCE breaks this dichotomy, extending these routing paradigms to assemble individualized ODEs strictly from static covariates and sparse timepoint snapshots.\\

\textbf{Identifiability and Control Theory.} The central theoretical barrier to bridging these paradigms is latent identifiability \citep{locatello2019challenging}. While existing literature resolves this using auxiliary variables for static distributions \citep{khemakhem2020variational}, temporal constraints for fully-observed sequences \citep{yao2021learning}, causal structure in dynamical latent systems \citep{yao2024marrying}, or structured causal learning under high-dimensional data limitations \citep{lagemann2023deep}, formal guarantees for \emph{cross-sectional} dynamics remain absent. CADENCE addresses this gap by drawing an equivalence to classical nonlinear control theory \citep{herrmann1977controllability}. We prove that combining static context routing (satisfying observability rank conditions) with score-based encoders \citep{song2021score} strictly bounds latent ambiguity, providing the first exact identifiability guarantees for single-timepoint trajectory inference (Theorems \ref{thm:foliation_id} \& \ref{thm:composition}).

\bigskip
\section{Experiments and Baselines}
\label{sec:experiments}

\textbf{Benchmark suite.}
Evaluating individual-level trajectory inference from sparse cross-sectional snapshots presents a fundamental methodological challenge: real-world ensembles often lack the continuous, individualized longitudinal ground truth required for validation. To rigorously assess CADENCE, we use a comprehensive suite of seven benchmarks that progressively scale in observation dimensionality ($p \in [2, 500]$) and complexity up to challenging real-world applications: 
\begin{enumerate}[leftmargin=*, topsep=2pt, itemsep=2pt]
    \item \textbf{Controlled Dynamical Systems (BM1, BM3, BM5):} Ranging from ecology (Lotka--Volterra) to nonlinear physics (Van der Pol, Duffing), these low-dimensional environments provide exact ground-truth vector fields. They isolate specific theoretical failure modes, testing exact SMoE routing, continuous parameter manifold approximations (MRA), and identifiability limits near chaotic bifurcations.
    \item \textbf{High-Dimensional \& Constrained Biomedicine (BM2, BM4):} Simulating epidemiological ensembles (SIR models) and high-dimensional gene expression, these tasks stress-test domain-specific inductive biases (e.g., partial observability, monotonicity) and the stability of the Stage~1 PF-ODE encoding under severe spatial compression ($p \gg q$).
    \item \textbf{Primary Single-Cell Genomics (BM6, BM7):} Bridging deterministic theory and stochastic reality, BM6 utilizes the SERGIO simulator \citep{dibaeinia2020sergio} to generate trajectories governed by  gene regulatory networks (GRNs) and transcriptional bursting. BM7 serves as the primary real-world evaluation using LARRY \citep{weinreb2020lineage}—a landmark single-cell dataset of  haematopoiesis. By leveraging experimental DNA lineage tracing, BM7 provides a valuable, experimental gold-standard temporal evaluation target, allowing us to assess CADENCE's cross-sectionally trained predictions against true biological observations.
\end{enumerate}

\medskip

\textbf{Baselines.}
We compare against three groups: 
\begin{itemize}
    \item[] \textbf{(1)~Cross-sectional ensemble estimators} (scNODE, TIGON, OT-CFM), which learn a shared population-level flow from unpaired snapshots and cannot produce individualised forecasts. 
    \item[] \textbf{(2)~Contextual dynamics models} (NDP, CoDa), which condition on individual context but require longitudinal observation pairs for training.
    \item[] \textbf{(3)~Latent continuous-time models} (ODE-RNN, ODE$^2$VAE, LaDID), which encode sequential observations into a latent state and integrate forward. 
\end{itemize}
All baselines are trained on full longitudinal sequences; OT-CFM, TIGON and scNODE are additionally trained in pure cross-sectional settings. 
Additional baseline values (scDiffEq, MIOFlow, TrajectoryNet, PRESCIENT) are provided where available from the literature. 

\bigskip
\section{Results}
\label{sec:results}

All benchmarks represent continuous-time evolution from an origin $t_0$. At an arbitrary observation time $t_{obs}$, a single state measurement is recorded alongside static context $c_i$. Crucially, $t_{obs}$ is drawn randomly per realization and carries no dynamical signal. Because standard sequential models fail catastrophically when restricted to extremely sparse or cross-sectional data, we evaluate them using a deliberately \textbf{asymmetric comparison: All competitor models are trained on full, dense longitudinal trajectories. CADENCE, by contrast, is trained strictly on sparse data: it sees only sparse cross-sectional snapshots plus static context per unit.} Therefore, baseline performance represents an empirical upper bound of what standard models achieve with dense temporal data.

\paragraph{Metrics.}
We evaluate performance using two complementary metrics: \textbf{Mean Absolute Error (MAE)} and \textbf{Sliced Wasserstein Distance (SW$_2$)}. MAE is a strict individual-level metric measuring how closely each unit's predicted trajectory matches its own ground truth. Unlike transport-based baselines that match aggregate flows, MAE strictly penalizes instance-level prediction errors. SW$_2$ is a population-level metric capturing whether the overall distribution of predicted states matches the true system manifold at future time steps, detecting mode collapse or distributional drift.

\begin{table}[t]
\centering
\caption{MAE and SW$_2$ at prediction horizon $H{=}20$. 
\textbf{Note asymmetric data access:} to provide strong empirical upper bounds, all  baselines are given an informational advantage and trained on full longitudinal trajectories (\textit{italicized}), with the strongest oracle baseline (OT-CFM) \underline{\textit{underlined}}. CADENCE is trained on sparse cross-sectional data, yet achieves competitive performance.}
\label{tab:paper_mae_sw2_h20}
\resizebox{\linewidth}{!}{%
{\small
\begin{tabular}{ll cccc cccccccc}
\toprule
& & \multicolumn{4}{c}{\textbf{Cross-Sectional Data}} & \multicolumn{8}{c}{\textbf{Full Trajectory Data (Oracle)}} \\
\cmidrule(lr){3-6} \cmidrule(l){7-14}
\textbf{BM} & \textbf{Metric} & \textbf{CADENCE} & OT-CFM & TIGON & scNODE & OT-CFM & TIGON & scNODE & NDP & CoDA & ODE{-}RNN & ODE$^2$VAE & LaDID \\
\midrule
1 & MAE & \textbf{2.05} & 27.47 & 34.85 & 33.05 & \underline{\textit{3.29}} & \textit{11.40} & \textit{28.83} & \textit{7.15} & 57.95 & \textit{45.93} & \textit{18.67} & \textit{25.55} \\
\rowcolor{blue!6}  & SW$_2$ & \textbf{0.070} & 0.493 & 0.815 & 0.873 & \underline{\textit{0.044}} & \textit{0.108} & \textit{1.000} & \textit{0.109} & 2.000 & \textit{1.446} & \textit{0.867} & \textit{1.158} \\
\midrule
2 & MAE & 0.025 & \textbf{0.023} & 0.120 & 0.154 & \underline{\textit{0.000}} & \textit{0.008} & \textit{0.169} & \textit{0.155} & 0.109 & \textit{0.169} & \textit{0.600} & \textit{0.271} \\
\rowcolor{blue!6}  & SW$_2$ & \textbf{0.050} & 0.058 & 0.127 & 1.306 & \underline{\textit{0.003}} & \textit{0.043} & \textit{0.725} & \textit{0.462} & 0.508 & \textit{0.512} & \textit{2.097} & \textit{1.073} \\
\midrule
3 & MAE & \textbf{0.097} & 1.59 & 1.59 & 1.32 & \underline{\textit{0.293}} & \textit{0.518} & \textit{1.349} & \textit{1.112} & 1.676 & \textit{1.924} & \textit{0.142} & \textit{1.171} \\
\rowcolor{blue!6}  & SW$_2$ & \textbf{0.050} & 0.103 & 0.104 & 0.735 & \underline{\textit{0.036}} & \textit{0.044} & \textit{0.762} & \textit{0.218} & \textit{0.493} & \textit{0.462} & \textit{0.078} & \textit{0.824} \\
\midrule
4 & MAE & \textbf{0.061} & 0.072 & 0.099 & 0.200 & \underline{\textit{0.002}} & \textit{0.030} & \textit{0.173} & \textit{0.102} & \textit{0.152} & \textit{0.130} & \textit{0.214} & \textit{0.130} \\
\rowcolor{blue!6}  & SW$_2$ & 0.241 & \textbf{0.216} & 0.338 & 0.690 & \underline{\textit{0.007}} & \textit{0.065} & \textit{0.511} & \textit{0.376} & \textit{2.973} & \textit{0.471} & \textit{1.054} & \textit{0.616} \\
\midrule
5 & MAE & \textbf{0.103} & 0.236 & 0.622 & 0.303 & \underline{\textit{0.074}} & \textit{0.261} & \textit{0.433} & \textit{0.358} & \textit{0.289} & \textit{1.397} & \textit{0.234} & \textit{0.706} \\
\rowcolor{blue!6}  & SW$_2$ & \textbf{0.137} & 0.510 & 1.859 & 1.030 & \underline{\textit{0.035}} & \textit{0.088} & \textit{0.837} & \textit{0.350} & \textit{0.535} & \textit{2.756} & \textit{0.481} & \textit{1.173} \\
\bottomrule
\end{tabular}%
}
}
\vspace{0.8cm}
\end{table}

\paragraph{Quantitative comparison across benchmarks.}
As shown in Table~\ref{tab:paper_mae_sw2_h20}, CADENCE outperforms all cross-sectional baselines and matches or surpasses sequence-trained baselines using only isolated snapshots. On the core low-dimensional environments (BM1, BM3), CADENCE consistently achieves the lowest per-instance MAE. On systems with monotonic constraints (BM2) or high-dimensional observations (BM4), OT-CFM trained on full trajectory data achieves the lowest MAE by directly exploiting its access to paired sequence endpoints; however, CADENCE remains highly competitive and leads all other baselines. Crucially, CADENCE maintains excellent SW$_2$ scores across all settings, whereas sequential models (ODE-RNN, LaDID) frequently exhibit order-of-magnitude worse SW$_2$, indicating severe distributional drift despite their dense training data (these trends remain stable at longer prediction horizons). 

\begin{figure}[b]
    \centering
    \begin{tikzpicture}
        \node[anchor=south west, inner sep=0] (img) at (0,0) {%
            \includegraphics[width=\linewidth]{figures/bm1_camera_ready_combined_moderate_noFont.pdf}%
        };
        \begin{scope}[
            x={($(img.south east)-(img.south west)$)},
            y={($(img.north west)-(img.south west)$)}
        ]
            \node[anchor=north west, font=\small\bfseries]  at (-0.02,1.06) {(a)};
            \node[anchor=north west, font=\small\bfseries]  at (0.73,1.06) {(b)};
            \node[anchor=north, font=\scriptsize]       at (0.875,1.06) {Mean routing accuracy};
            \node[anchor=north, font=\scriptsize]  at (0.127,1.06)
                {$k{=}0$: Limit cycle (traj. 775)};
            \node[anchor=north, font=\scriptsize]  at (0.366,1.06)
                {$k{=}1$: High-amplitude\ (traj. 117)};
            \node[anchor=north, font=\scriptsize]  at (0.610,1.06)
                {$k{=}2$: Collapse (traj. 283)};

            \node[anchor=north west, font=\small\bfseries] at (-0.02,0.38) {(c)};

\node[anchor=east, font=\scriptsize, rotate=90, transform shape] at (0.753,0.96) {$k{=}0$};
\node[anchor=east, font=\scriptsize, rotate=90, transform shape] at (0.753,0.78) {$k{=}1$};
\node[anchor=east, font=\scriptsize, rotate=90, transform shape] at (0.753,0.6) {$k{=}2$};

            \node[anchor=north, font=\scriptsize] at (0.805,0.45) {E1};
            \node[anchor=north, font=\scriptsize] at (0.88,0.45) {E2};
            \node[anchor=north, font=\scriptsize] at (0.955,0.45) {E3};

            \node[anchor=north, font=\scriptsize] at (0.88,0.405) {Prediction expert};

            \node[rotate=90, anchor=south, font=\scriptsize] at (0.01,0.845) {State 0};
            \node[rotate=90, anchor=south, font=\scriptsize] at (0.01,0.56) {State 1};

            \node[anchor=north, font=\scriptsize] at (0.121,0.41) {Time};
            \node[anchor=north, font=\scriptsize] at (0.367,0.41) {Time};
            \node[anchor=north, font=\scriptsize] at (0.612,0.41) {Time};

            \node[anchor=north, font=\scriptsize] at (0.22,0.295)
                {RMSE over time (mean\,$\pm$\,std, post-enrollment)};

            \node[rotate=90, anchor=south, font=\scriptsize] at (0.01,0.163) {RMSE};

            \node[anchor=south, font=\scriptsize] at (0.510,-0.05) {Time};

        \definecolor{cadencecolor}{HTML}{d62728}
        \definecolor{ndpcolor}{HTML}{e377c2}
        \definecolor{cfmcolor}{HTML}{ff7f0e}
        \definecolor{ode2vaecolor}{HTML}{8c564b}
        \definecolor{gtcolor}{HTML}{000000}
        
        \matrix[
            anchor=west,
            row sep=0pt,
            column sep=1.75em,
        ] at (0.06,-0.075)
        {
            \node[font=\scriptsize]{\tikz[baseline=-0.5ex]{\draw[cadencecolor, line width=1.5pt] (0,0)--(0.05,0);} CADENCE}; &
            \node[font=\scriptsize]{\tikz[baseline=-0.5ex]{\draw[ndpcolor, line width=1.5pt] (0,0)--(0.05,0);} NDP}; &
            \node[font=\scriptsize]{\tikz[baseline=-0.5ex]{\draw[cfmcolor, line width=1.5pt] (0,0)--(0.05,0);} OT-CFM}; &
            \node[font=\scriptsize]{\tikz[baseline=-0.5ex]{\draw[ode2vaecolor, line width=1.5pt] (0,0)--(0.05,0);} ODE$^2$VAE}; &
            \node[font=\scriptsize]{\tikz[baseline=-0.5ex]{\draw[gtcolor, line width=1.5pt] (0,0)--(0.05,0);} ground truth}; \\
        };
        
        \end{scope}
    \end{tikzpicture}

    \caption{BM1  results comparing CADENCE against published baselines.
    \textbf{(a)}~Per-expert predicted trajectories  vs.\ ground truth.
    \textbf{(b)}~Mean routing accuracy matrix: diagonal confirms effective subgroup prediction.
    \textbf{(c)}~ RMSE across the prediction horizon.
    }
    \label{fig:bm1_combined}
\end{figure}

\paragraph{BM1: Trajectory recovery and subgroup routing.}
BM1 isolates the core challenge of individual trajectory inference. Comparing against the best-performing baselines (OT-CFM, NDP, ODE$^2$VAE), Figure~\ref{fig:bm1_combined}a shows that CADENCE closely tracks ground-truth trajectories across all subgroup dynamics regimes. In contrast, standard sequence models either collapse to a single sub-population (e.g., k=2) or diverge entirely when faced with heterogeneous dynamics.
Panel~(c) confirms that CADENCE maintains the tightest error band across the full prediction horizon. Finally, panel~(b) demonstrates that CADENCE's SMoE router recovers the true subgroup structure in a fully unsupervised manner, achieving 94\% routing accuracy.

\begin{figure}[b]
    \centering
    \begin{tikzpicture}
        \node[anchor=south west, inner sep=0] (img) at (0,0) {%
            \includegraphics[width=\linewidth]{figures/sergio_camera_ready_combined_moderate_noFont.pdf}%
        };
        \begin{scope}[
            x={($(img.south east)-(img.south west)$)},
            y={($(img.north west)-(img.south west)$)}
        ]
            \node[anchor=north west, font=\small\bfseries]  at (-0.02,1.06) {(a)};
            \node[anchor=north west, font=\small\bfseries]  at (0.73,1.06) {(b)};
            \node[anchor=north, font=\scriptsize]       at (0.875,1.06) {Mean routing accuracy};
            \node[anchor=north, font=\scriptsize]  at (0.19,1.06)
                {$k{=}0$: Fate U (traj. 2294)};
            \node[anchor=north, font=\scriptsize]  at (0.55,1.06)
                {$k{=}1$: Fate T\ (traj. 1954)};

            \node[anchor=north west, font=\small\bfseries] at (-0.02,0.355) {(c)};

\node[anchor=east, font=\scriptsize, rotate=90, transform shape] at (0.753,0.91) {$k{=}0$};
\node[anchor=east, font=\scriptsize, rotate=90, transform shape] at (0.753,0.65) {$k{=}1$};
            \node[anchor=north, font=\scriptsize] at (0.825,0.45) {E1};
            \node[anchor=north, font=\scriptsize] at (0.938,0.45) {E2};

            \node[anchor=north, font=\scriptsize] at (0.88,0.405) {Prediction expert};

            \node[rotate=90, anchor=south, font=\scriptsize] at (0.01,0.895) {PC 1};
            \node[rotate=90, anchor=south, font=\scriptsize] at (0.01,0.70) {PC 2};
            \node[rotate=90, anchor=south, font=\scriptsize] at (0.01,0.54) {PC 3};

            \node[anchor=north, font=\scriptsize] at (0.2,0.41) {Time};
            \node[anchor=north, font=\scriptsize] at (0.557,0.41) {Time};

            \node[anchor=north, font=\scriptsize] at (0.23,0.295)
                {RMSE over time (mean\,$\pm$\,std, post-enrollment)};

            \node[rotate=90, anchor=south, font=\scriptsize] at (0.01,0.163) {RMSE};

            \node[anchor=south, font=\scriptsize] at (0.510,-0.05) {Time};

        \definecolor{cadencecolor}{HTML}{d62728}
        \definecolor{ndpcolor}{HTML}{e377c2}
        \definecolor{cfmcolor}{HTML}{ff7f0e}
        \definecolor{ode2vaecolor}{HTML}{8c564b}
        \definecolor{gtcolor}{HTML}{000000}
        
        \matrix[
            anchor=west,
            row sep=0pt,
            column sep=1.75em,
        ] at (0.06,-0.075)
        {
            \node[font=\scriptsize]{\tikz[baseline=-0.5ex]{\draw[cadencecolor, line width=1.5pt] (0,0)--(0.05,0);} CADENCE}; &
            \node[font=\scriptsize]{\tikz[baseline=-0.5ex]{\draw[ndpcolor, line width=1.5pt] (0,0)--(0.05,0);} NDP}; &
            \node[font=\scriptsize]{\tikz[baseline=-0.5ex]{\draw[cfmcolor, line width=1.5pt] (0,0)--(0.05,0);} OT-CFM}; &
            \node[font=\scriptsize]{\tikz[baseline=-0.5ex]{\draw[ode2vaecolor, line width=1.5pt] (0,0)--(0.05,0);} ODE$^2$VAE}; &
            \node[font=\scriptsize]{\tikz[baseline=-0.5ex]{\draw[gtcolor, line width=1.5pt] (0,0)--(0.05,0);} ground truth}; \\
        };
        
        \end{scope}
    \end{tikzpicture}

    \caption{BM6  results comparing CADENCE against published baselines.
    \textbf{(a)}~Per-expert predicted trajectories  vs.\ ground truth.
    \textbf{(b)}~Mean routing accuracy matrix: diagonal confirms effective  subgroup prediction.
    \textbf{(c)}~ RMSE across the prediction horizon.
    }
    \label{fig:sergio_combined}
\end{figure}


\paragraph{BM6--BM7: High-dimensional and real-world stochastic dynamics.}
The final benchmarks extend to high-dimensional stochastic dynamics. BM6 models the nonlinear evolution of a gene regulatory network ($p=100$, $K=2$). Despite intrinsic stochasticity, CADENCE closely matches OT-CFM in both MAE ($0.434$ vs. $0.417$) and RMSE over time (Fig.~\ref{fig:sergio_combined}c). Crucially, only CADENCE faithfully recovers both subgroup trajectories: all other methods either collapse to the ensemble mean or approximate a single subgroup, as shown in Fig.~\ref{fig:sergio_combined}a. Crucially, CADENCE further determines cell subgroups precisely achieving a 85\% routing accuracy (Fig.~\ref{fig:sergio_combined}b). The distributional comparison reinforces this picture: CADENCE and OT-CFM achieve SW$_2$ scores of $0.20$ and $0.12$, respectively, whereas \emph{all other methods exceed $1.0$}, indicating severe distributional divergence from the true cell population. Together, these results demonstrate that CADENCE's cross-sectional training framework remains effective under the high-dimensionality, stochasticity and nonlinearity of biological gene regulation — a setting where standard trajectory models break down entirely.

\medskip
The most demanding evaluation is BM7 (LARRY), which tracks primary  cells differentiating into two terminal lineages (Figure~\ref{fig:larry_combined}). Here, we compare CADENCE directly against published results from dedicated, domain-specific trajectory inference methods evaluated on the same dataset. CADENCE discovers the terminal fates in a fully unsupervised manner, achieving state-of-the-art fate prediction accuracy (67.8\%, outperforming the best baseline at 58.5\%). Furthermore, it achieves the lowest population-level Sinkhorn divergence at both evaluation time points and superior fate probability calibration (AUROC 0.84), proving that CADENCE's cross-sectional training strategy scales effectively to primary, real-world dynamical systems.

\begin{figure}[t]
    \centering
    \begin{tikzpicture}
        \node[anchor=south west, inner sep=0] (img) at (0,0) {%
            \includegraphics[width=\linewidth]{figures/larry_figS_combined_v2_noFont.pdf}%
        };
        \begin{scope}[
            x={($(img.south east)-(img.south west)$)},
            y={($(img.north west)-(img.south west)$)}
        ]

        \node[anchor=center, inner sep=0] at (0.523, 0.575) {%
            \includegraphics[width=0.91\linewidth]{figures/larry_figS_combined_noFont_v2.pdf}%
        };

        \node[anchor=north east, font=\small\bfseries] at (0.025,1.025) {(a)};
        \node[anchor=north west, font=\small\bfseries] at (0.265,1.025) {(b)};
        \node[anchor=north west, font=\small\bfseries] at (0.51,1.025) {(c)};
        \node[anchor=north west, font=\small\bfseries] at (0.745,1.025) {(d)};

        \node[anchor=north west, font=\small] at (0.09,1.025) {Fate accuracy};
        \node[anchor=north west, font=\small] at (0.31,1.025) {Sinkhorn per day};
        \node[anchor=north west, font=\small] at (0.553,1.025) {Fate prob.\ Pearson};
        \node[anchor=north west, font=\small] at (0.81,1.05) {Observation vs};
        \node[anchor=north west, font=\small] at (0.82,0.97) {CADENCE};

        \node[anchor=east, font=\scriptsize] at (0.08,0.854) {CADENCE};
        \node[anchor=east, font=\scriptsize] at (0.08,0.744) {scDiffEq};
        \node[anchor=east, font=\scriptsize] at (0.08,0.634) {PRESCIENT};
        \node[anchor=east, font=\scriptsize] at (0.08,0.524) {TIGON};
        \node[anchor=east, font=\scriptsize] at (0.08,0.414) {Dynamo};
        \node[anchor=east, font=\scriptsize] at (0.08,0.304) {CellRank};

        \node[anchor=west, font=\scriptsize] at (0.22,0.852) {67.8};
        \node[anchor=west, font=\scriptsize] at (0.199,0.742) {58.5};
        \node[anchor=west, font=\scriptsize] at (0.187,0.632) {52.9};
        \node[anchor=west, font=\scriptsize] at (0.188,0.522) {50.7};
        \node[anchor=west, font=\scriptsize] at (0.132,0.412) {26.3};
        \node[anchor=west, font=\scriptsize] at (0.132,0.302) {26.3};

        \node[anchor=north, font=\scriptsize] at (0.079,0.24) {0};
        \node[anchor=north, font=\scriptsize] at (0.121,0.24) {20};
        \node[anchor=north, font=\scriptsize] at (0.163,0.24) {40};
        \node[anchor=north, font=\scriptsize] at (0.204,0.24) {60};
        \node[anchor=north, font=\scriptsize] at (0.246,0.24) {80};
        \node[anchor=north, font=\scriptsize] at (0.165,0.185) {Fate prediction accuracy (\%)};

        \node[rotate=90, anchor=center, font=\scriptsize] at (0.28,0.580) {Sinkhorn divergence};
        \node[anchor=east, font=\scriptsize] at (0.32,0.238) {0};
        \node[anchor=east, font=\scriptsize] at (0.32,0.338) {5};
        \node[anchor=east, font=\scriptsize] at (0.32,0.439) {10};
        \node[anchor=east, font=\scriptsize] at (0.32,0.540) {15};
        \node[anchor=east, font=\scriptsize] at (0.32,0.641) {20};
        \node[anchor=east, font=\scriptsize] at (0.32,0.741) {25};
        \node[anchor=east, font=\scriptsize] at (0.32,0.842) {30};

        \node[anchor=north, font=\scriptsize] at (0.324,0.24) {Day 3};
        \node[anchor=north, font=\scriptsize] at (0.405,0.24) {Day 4};
        \node[anchor=north, font=\scriptsize] at (0.48,0.24) {Day 5};
        \node[anchor=north, font=\scriptsize] at (0.405,0.185) {Time point};

        \definecolor{prescientcol}{HTML}{9467bd}
        \definecolor{scdiffeqcol}{HTML}{1f77b4}
        \definecolor{cadencecol}{HTML}{d62728}
        \node[anchor=west, font=\fontsize{6}{7}\selectfont] at (0.335,0.393) {PRESCIENT};
        \node[anchor=west, font=\fontsize{6}{7}\selectfont] at (0.335,0.337) {scDiffEq};
        \node[anchor=west, font=\fontsize{6}{7}\selectfont] at (0.335,0.282) {CADENCE};

        \node[rotate=90, anchor=center, font=\scriptsize] at (0.515,0.580) {Pearson correlation};
        \node[anchor=east, font=\scriptsize] at (0.561,0.238) {0.0};
        \node[anchor=east, font=\scriptsize] at (0.561,0.337) {0.1};
        \node[anchor=east, font=\scriptsize] at (0.561,0.436) {0.2};
        \node[anchor=east, font=\scriptsize] at (0.561,0.535) {0.3};
        \node[anchor=east, font=\scriptsize] at (0.561,0.634) {0.4};
        \node[anchor=east, font=\scriptsize] at (0.561,0.734) {0.5};
        \node[anchor=east, font=\scriptsize] at (0.561,0.833) {0.6};

        \node[rotate=35, anchor=east, font=\fontsize{6}{7}\selectfont] at (0.585,0.215) {PBA};
        \node[rotate=35, anchor=east, font=\fontsize{6}{7}\selectfont] at (0.610,0.215) {fateID};
        \node[rotate=35, anchor=east, font=\fontsize{6}{7}\selectfont] at (0.635,0.215) {TrajectoryNet};
        \node[rotate=35, anchor=east, font=\fontsize{6}{7}\selectfont] at (0.66,0.215) {WOT};
        \node[rotate=35, anchor=east, font=\fontsize{6}{7}\selectfont] at (0.686,0.215) {TIGON};
        \node[rotate=35, anchor=east, font=\fontsize{6}{7}\selectfont] at (0.71,0.215) {MIOFlow};
        \node[rotate=35, anchor=east, font=\fontsize{6}{7}\selectfont] at (0.736,0.215) {CADENCE};

        \node[rotate=90, anchor=west, font=\fontsize{6}{7}\selectfont] at (0.574,0.600) {0.375};
        \node[rotate=90, anchor=west, font=\fontsize{6}{7}\selectfont] at (0.598,0.620) {0.395};
        \node[rotate=90, anchor=west, font=\fontsize{6}{7}\selectfont] at (0.621,0.610) {0.388};
        \node[rotate=90, anchor=west, font=\fontsize{6}{7}\selectfont] at (0.645,0.635) {0.403};
        \node[rotate=90, anchor=west, font=\fontsize{6}{7}\selectfont] at (0.668,0.69) {0.453};
        \node[rotate=90, anchor=west, font=\fontsize{6}{7}\selectfont] at (0.692,0.69) {0.456};
        \node[rotate=90, anchor=west, font=\fontsize{6}{7}\selectfont] at (0.715,0.765) {0.529};

        \node[rotate=90, anchor=center, font=\scriptsize] at (0.76,0.580) {PC 2};
        \node[anchor=east, font=\scriptsize] at (0.8,0.245) {$-3$};
        \node[anchor=east, font=\scriptsize] at (0.8,0.354) {$-2$};
        \node[anchor=east, font=\scriptsize] at (0.8,0.463) {$-1$};
        \node[anchor=east, font=\scriptsize] at (0.8,0.572) {0};
        \node[anchor=east, font=\scriptsize] at (0.8,0.682) {1};
        \node[anchor=east, font=\scriptsize] at (0.8,0.791) {2};
        \node[anchor=east, font=\scriptsize] at (0.8,0.900) {3};

        \node[anchor=north, font=\scriptsize] at (0.806,0.24) {$-2$};
        \node[anchor=north, font=\scriptsize] at (0.858,0.24) {0};
        \node[anchor=north, font=\scriptsize] at (0.909,0.24) {2};
        \node[anchor=north, font=\scriptsize] at (0.961,0.24) {4};
        \node[anchor=north, font=\scriptsize] at (0.885,0.185) {PC 1};

        \node[anchor=west, font=\fontsize{6}{7}\selectfont] at (0.812,0.344) {Observed};
        \node[anchor=west, font=\fontsize{6}{7}\selectfont] at (0.812,0.287) {CADENCE};

        \end{scope}
    \end{tikzpicture}
    \caption{BM7 (LARRY haematopoiesis) results comparing CADENCE against published baselines.
    \textbf{(a)}~Fate prediction accuracy: CADENCE discovers Neutrophil vs.\ Monocyte fates unsupervised, outperforming all baselines including scDiffEq (67.8\% vs.\ 58.5\%).
    \textbf{(b)}~Per-day Sinkhorn divergence: CADENCE achieves the lowest population-level distributional distance at all evaluation time points.
    \textbf{(d)}~Fate probability Pearson correlation against clonal ground truth.
    \textbf{(d)}~Comparison of observed and CADENCE predicted clonal evolution trajectories.
    }
    \label{fig:larry_combined}
\end{figure}

\paragraph{Ablation Study.} To isolate the contributions of our architectural choices, we conduct a systematic ablation study. 
First, comparing four Stage 2 dynamics variants, 
the Augmented Neural ODE consistently achieves the best performance across benchmarks, outperforming standard ODE, Multi-Block, and Flow Matching architectures. Second, for high-dimensional observations ($p \gg q$), ablating spatial compression strategies 
demonstrates that mapping to a compact latent space via PCA or VAE is essential for stable dynamics learning, drastically outperforming uncompressed linear baselines. Finally, a controlled component ablation 
probes the model's sensitivity to sample size, observation noise, and network capacity. Crucially, disabling the SMoE router triggers severe performance degradation, validating that our context-driven routing mechanism is indispensable for recovering heterogeneous dynamics.

\bigskip
\section{Discussion and Limitations}
\label{sec:discussion}

We introduced CADENCE, a model class that breaks a longstanding dichotomy between aggregate mass transport and data-hungry sequential latent models. By showing how sparse cross-sectional snapshots can be rigorously converted into effective longitudinal signals, CADENCE provides new identifiability guarantees for trajectory inference from heterogeneous data. The CADENCE architecture is a minimal  implementation of key geometric/dynamical constraints (rather than only a collection of heuristic design choices).

\paragraph{Limitations.} The validity of the recovered trajectories rests on these structural assumptions. The most fundamental vulnerability is unobserved confounding: if critical information dictating the dynamical archetype is missing from the static context, COA is violated. 
In this regime, the model risks losing individual identifiability, gracefully degrading into an ensemble-level optimal transport flow rather than capturing true individual dynamics. 
That said, in settings with rich context information, it is arguably reasonable to expect information as needed for COA to be available in the static context vector. Furthermore, fixing the expert pool $K$ may yield high MRA approximation errors in vastly heterogeneous systems, though the Carath\'{e}odory bound ($K \ge d'+1$) provides a mathematically rigorous lower bound for empirical ablations. 

\paragraph{Broader Impacts and Future Directions.} CADENCE unlocks individualized, continuous-time forecasting in domains where full longitudinal tracking is difficult or impossible. To scale this paradigm, overcoming optimization stiffness in the strictly cross-sectional limit remains an exciting frontier. Future extensions could treat the expert pool as a non-parametric Bayesian prior (e.g., a Dirichlet Process) to dynamically discover an unbounded number of dynamical archetypes. Finally, as with any model conditioned on  contexts, predictive vector fields risk propagating biases into forecasted futures; rigorous, domain-specific validation remains imperative prior to consequential deployment.

\vspace{1cm}
{\bf Acknowledgments.}
The authors gratefully acknowledge the Gauss Centre for Supercomputing e.V. for supporting this project by providing computing time on the GCS Supercomputers.
\bibliographystyle{abbrvnat}
\bibliography{refs}

\onecolumn
\clearpage

\appendix

\renewcommand{\thesection}{\Alph{section}}

\renewcommand{\thesubsection}{\thesection.\arabic{subsection}}

\renewcommand{\thefigure}{\thesection.\arabic{figure}}

\renewcommand{\thetable}{\thesection.\arabic{table}}

\counterwithin{figure}{section}
\counterwithin{table}{section}

{\Large {\bf Appendices.}}

\section{Proofs of Theoretical Results}
\label{app:proofs}

\subsection{Proof of Proposition~\ref{prop:nonid} (Non-identifiability without structure)}
Let $\phi:\R^q\to\R^q$ be a smooth diffeomorphism. Define $\tilde{\bz}^i_t=\phi(\bz^i_t)$. By the chain rule, $$\tfrac{d\tilde{\bz}^i_t}{dt} = D\phi(\phi^{-1}(\tilde{\bz}^i_t))h_\theta(\phi^{-1}(\tilde{\bz}^i_t),t,\bc_i;\bw_i) = \tilde{h}(\tilde{\bz}^i_t,t,\bc_i;\bw_i),$$ so $\tilde{\bz}^i_t$ satisfies the ODE with $\tilde{h}$ and initial condition $\phi(\bz^i_0)$. Reconstruction is unchanged: $$\mathcal{F}_\psi^{-1}(\phi^{-1}(\phi(\bz^i_{t_i})))=\bx^i_{t_i}.$$ Since $\mathrm{Diff}(\R^q)$ is infinite-dimensional, valid configurations are uncountably infinite. \hfill$\square$

\subsection{Proof of Proposition~\ref{prop:reduced_ambiguity} (Reduced Spatial Ambiguity)}
If both $(\mathcal{F}_\psi)_\#p(x)=\mathcal{N}(\mathbf{0},\mathbf{I})$ and $(\tilde{\mathcal{F}}_\psi)_\#p(x)=\mathcal{N}(\mathbf{0},\mathbf{I})$, then $\varphi=\tilde{\mathcal{F}}_\psi\circ\mathcal{F}_\psi^{-1}$ satisfies $\varphi_\#\mathcal{N}(\mathbf{0},\mathbf{I})=\mathcal{N}(\mathbf{0},\mathbf{I})$, so $\varphi\in\mathcal{S}_{\mathrm{iso}}$. The PF-ODE bijection is unique given a fixed score function \cite{song2021score}. 

\medskip
\emph{Remark on $\mathcal{S}_{\mathrm{iso}}$ vs.\ $O(q)$:} The subgroup $\mathcal{S}_\text{iso}$ of $\mathcal{N}(0,\mathbf{I})$-preserving diffeomorphisms strictly contains the orthogonal group $O(q)$: the twist map $(r, \theta) \mapsto (r, \theta + f(r))$ in polar coordinates preserves the Gaussian radial density for any smooth $f$ but is non-orthogonal. We therefore do not claim the residual symmetry reduces to $O(q)$ alone. Theorem~\ref{thm:composition} shows these residual symmetries are observationally harmless: the prediction $\hat{x}^i_t$ is invariant under conjugation by any $\varphi \in \mathcal{S}_\text{iso}$. \hfill$\square$

\subsection{Formal Regularity Conditions (Assumptions~\ref{asm:pr} \& \ref{asm:foa})}
\label{app:foa_conditions}
\begin{assumption}[Parametric Restriction, PR]\label{asm:pr}
  $h_\theta$ belongs to a parametric family $\mathcal{H}\!=\!\{h_\theta:\theta\in\Theta\}$ with $\Theta\subset\R^m$ finite-dimensional, where distinct parameters produce distinct flows.
\end{assumption}

\begin{assumption}[Flow Observability, FOA]\label{asm:foa}
  For each leaf $\lambda$, the map $w\mapsto\{(\Phi^w_{t,t_0})_\#\rho^\lambda_0:t\in\mathcal{T}_\lambda\}$ is injective within $\mathcal{H}$: distinct parameters produce observationally distinct families of time-indexed pushforward marginals.
\end{assumption}

Under Assumption~\ref{asm:foa}, spatial richness converts pointwise flow divergence into distributional distinguishability. Specifically, we delineate the regimes where FOA holds versus where it breaks down:

\paragraph{When FOA holds (distributional constraints are rich enough):}
\
\begin{itemize}[leftmargin=*,itemsep=2pt]
    \item \emph{Spatial richness:} The baseline distribution $\rho^\lambda_0$ has density bounded below on an open set $V\subseteq\R^q$, allowing pushforward measures to probe the vector field across a positive-measure region.
    \item \emph{Temporal richness:} Enrollment times span an interval $[t_{\min},t_{\max}]$ wide enough that $$\|(\Phi^w_{t,t_0})_\#\rho^\lambda_0 - (\Phi^{w'}_{t,t_0})_\#\rho^\lambda_0\|_{\mathrm{TV}}>0$$ for some $t$ and all distinct parameters $w\neq w'$ in $\mathcal{H}$.
\end{itemize}

\paragraph{When FOA fails (distributional constraints are insufficient):}
\begin{itemize}[leftmargin=*,itemsep=2pt]
    \item Observations concentrate near a fixed point of the dynamics, where distinct vector fields yield indistinguishable pushforward measures.
    \item Enrollment times collapse to a single point, entirely eliminating temporal variation.
    \item The baseline $\rho^\lambda_0$ concentrates precisely on a "parameter confusion zone" where distinct parameters produce identical flows. (We note that the BM7 experiment deliberately places the model in this marginal FOA regime to rigorously test failure bounds.)
\end{itemize}

\subsection{Proof of Theorem~\ref{thm:foliation_id} (Parameter Identifiability Given Observable Foliation)}

\textbf{Step 1 (Latent recovery).} $\mathcal{F}_\psi$ is a smooth bijection (condition~iii), so $z^i_{t_i}=\mathcal{F}_\psi(x^i_{t_i};c_i)$ is uniquely recoverable. The measure $Q$ on $\R^q\times\mathcal{C}\times\mathcal{T}$ is uniquely determined.\\

\textbf{Step 2 (Leaf assignment).} By COA (Assumption~\ref{asm:coa}), $g^*:\mathcal{C}\to\Lambda$ is measurable, deterministic function. Thus the leaf assignment $\lambda(i)=g^*(c_i)$ is uniquely determined by the context alone; no observations $x^i_{t_i}$ are used. The foliation is assumed observable via COA, not identified from data.\\

\textbf{Step 3 (Within-leaf parameter recovery --- non-trivial).} Within leaf $\lambda$, data provides samples from $P(z_t|\lambda,t)=(\Phi^{\hat{w}(\lambda)}_{t,t_0})_\#\rho^\lambda_0$ at enrollment times $t\in\mathcal{T}_\lambda$. By FOA (Assumption~\ref{asm:foa}), the map $$\hat{w}\mapsto\{(\Phi^{\hat{w}}_{t,t_0})_\#\rho^\lambda_0:t\in\mathcal{T}_\lambda\}$$ is injective within $\mathcal{H}$. Condition~(ii) ensures enrollment times span an interval, so distributional constraints are non-degenerate. Injectivity implies $\hat{w}(\lambda)$ is uniquely determined---exactly identifiable. The $\epsilon$-gap to $w(\lambda)$ is bounded by MRA (Assumption~\ref{asm:mra}): $\|w(\lambda)-\hat{w}(\lambda)\|\leq\epsilon$. This step requires no individual trajectory pairs; it operates entirely through distributional constraints. \emph{Why PR (Assumption~\ref{asm:pr}) is essential:} without restricting $h_\theta$ to $\mathcal{H}$, the inverse problem of recovering a vector field from pushforward marginals is ill-posed for nonparametric families.\\

\textbf{Step 4 (Cross-leaf distinguishability).} By condition~(i), $\lambda\neq\lambda'$ implies $w(\lambda)\neq w(\lambda')$ within $\mathcal{H}$. Combined with Step 2, the system $(g^*,\{w(\lambda)\})$ is identified up to leaf-label permutation. \hfill$\square$

\subsection{Extended Basis Identifiability via Simplex Geometry}
\label{app:extended_identifiability}

\begin{lemma}[Basis identifiability under anchor conditions]
\label{lem:1b}
Under Theorem~\ref{thm:foliation_id}, if additionally (i) for each $k\in[K]$ there exists $\lambda_k$ with $\alpha(\lambda_k)=e_k$ (anchor leaves), and (ii) $\{w_1,\ldots,w_K\}$ are affinely independent, then $\bW=[w_1,\ldots,w_K]$ is identifiable up to column permutation.
\end{lemma}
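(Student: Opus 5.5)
The plan is to reduce basis identifiability to the per-leaf identifiability already established in Theorem~\ref{thm:foliation_id}, and then use simplex geometry to pin down the columns of $\bW$. By Theorem~\ref{thm:foliation_id}, for every leaf $\lambda$ the effective parameter $\hat{w}(\lambda)=\bW\alpha(\lambda)$ is uniquely determined by the ensemble distribution, up to a permutation of leaf labels. Hence the whole set $\widehat{\mathcal{W}}=\{\hat{w}(\lambda):\lambda\in\Lambda\}\subset\R^d$ is identified as a set, since a relabelling of leaves does not change it. It is also identified together with the leaf map $g^*$ from COA. The remaining task is to show that the columns of $\bW$ can be read off from $\widehat{\mathcal{W}}$ alone.

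First, by the anchor condition (i), each column satisfies $w_k=\bW e_k=\hat{w}(\lambda_k)\in\widehat{\mathcal{W}}$, so every expert is itself an identified leaf parameter. Second, every $\hat{w}(\lambda)$ is a convex combination of the columns, since $\alpha(\lambda)\in\Delta^{K-1}$. Therefore
\[
\widehat{\mathcal{W}}\subseteq\mathrm{conv}\{w_1,\ldots,w_K\},
\]
with all $K$ vertices attained, and so $\mathrm{conv}(\widehat{\mathcal{W}})=\mathrm{conv}\{w_1,\ldots,w_K\}$. By condition (ii) this hull is a nondegenerate $(K-1)$-simplex. The extreme points of a simplex are exactly its $K$ vertices, because affine independence rules out any vertex being a convex combination of the others. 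Hence $\{w_1,\ldots,w_K\}=\mathrm{ext}\,\mathrm{conv}(\widehat{\mathcal{W}})$, a set determined entirely by identified quantities.

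To conclude, take a second configuration $\bW'$ with routing $\alpha'$ that satisfies the same hypotheses and produces the same data. By Theorem~\ref{thm:foliation_id} it yields the same set $\widehat{\mathcal{W}}$, so its columns form the same extreme-point set. Thus $\bW'=\bW P$ for some permutation matrix $P$. As a by-product, affine independence makes barycentric coordinates unique, so $\alpha'(\lambda)=P^\top\alpha(\lambda)$; this is consistent with the column permutation, and the anchor leaves are matched accordingly.

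The step I expect to be the main obstacle is the transfer from Theorem~\ref{thm:foliation_id}. That theorem identifies $\hat{w}(\lambda)$ only within $\mathcal{H}$ and only up to the $\mathcal{S}_{\mathrm{iso}}$ quotient of Theorem~\ref{thm:composition}. I must therefore check that the residual spatial symmetry acts on the dynamics only through conjugation of $h_\theta$, and not on the parameter vector $w$ through FiLM. Otherwise the set $\widehat{\mathcal{W}}$ would be defined only up to a reparametrisation. I would handle this by working with the parameter $w$ fixed in the model class, using the PR assumption that distinct parameters give distinct flows, so that $\widehat{\mathcal{W}}$ is an honest subset of $\R^d$. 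Any $\epsilon$-approximation from MRA would be treated as exact here, since the lemma concerns $\hat{w}$ itself.
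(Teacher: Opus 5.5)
Your proof is correct and starts from the same observation as the paper's: at an anchor leaf $\hat{w}(\lambda_k)=\bW e_k=w_k$, so every column of $\bW$ is itself one of the leaf parameters already identified by Theorem~\ref{thm:foliation_id}. The paper stops there. It reads the columns off the anchor leaves, uses affine independence only to make them distinct, and recovers routing as $\alpha(\lambda)=\bW^\dagger\hat{w}(\lambda)$.

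Your extra step, $\{w_1,\ldots,w_K\}=\mathrm{ext}\,\mathrm{conv}(\widehat{\mathcal{W}})$, supplies two things the two-line proof leaves implicit.

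First, the data only give the unlabelled set $\widehat{\mathcal{W}}$, and a competing configuration $(\bW',\alpha')$ may place its anchors at different leaves. Characterising the columns intrinsically as extreme points, with the anchor condition imposed on $\bW'$ too, is what actually forces $\bW'=\bW P$. It also shows why anchors cannot be dropped: without them a larger simplex could enclose $\widehat{\mathcal{W}}$, which is the case Lemma~\ref{lem:1c} handles by minimum volume.

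Second, recovering $\alpha(\lambda)$ as the unique barycentric coordinates of $\hat{w}(\lambda)$ needs only affine independence. The paper's formula needs $\bW^\dagger\bW=I$, i.e.\ linearly independent columns. That fails whenever $\mathbf{0}$ lies in the affine hull of the columns, for example always when $K=d+1$. So your version of the routing claim is the one that actually holds under hypothesis (ii).

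Finally, the $\mathcal{S}_{\mathrm{iso}}$ issue you single out as the main obstacle does not arise here. The lemma is stated under Theorem~\ref{thm:foliation_id}, which already gives you the $\hat{w}(\lambda)$ as vectors in $\R^d$. The paper's proof of Theorem~\ref{thm:composition} (Step~2) also treats $w(\lambda)$ as invariant under $\varphi$, so this point is inherited rather than something this lemma has to resolve.
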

\begin{proof}
At anchor leaves $\hat{w}(\lambda_k)=\bW e_k=w_k$ directly recovers each column. Affine independence ensures distinctness; routing is recovered as $\alpha(\lambda)=\bW^\dagger\hat{w}(\lambda)$.
\end{proof}

\begin{lemma}[Basis identifiability via minimum-volume simplex]
\label{lem:1c}
If $\hat{w}(\Lambda)$ has nonempty relative interior in the affine hull of $\mathrm{conv}(\bW)$, and the minimum-volume simplex containing $\hat{w}(\Lambda)$ is unique, then $\bW$ is identifiable up to column permutation without anchor leaves.
\end{lemma}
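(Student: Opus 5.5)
The plan is to reduce the claim to classical identifiability of minimum-volume enclosing simplices (the ``volume-minimisation'' results from nonnegative matrix factorisation and hyperspectral unmixing). By Theorem~\ref{thm:foliation_id}, every per-leaf parameter $\hat{w}(\lambda)$ is identified. Hence the point set $\hat{w}(\Lambda)\subset\R^d$ is a quantity determined by the data. The goal is to show that this set alone fixes the columns of $\bW$ up to relabelling. Since $\hat{w}(\lambda)=\bW\alpha(\lambda)$ with $\alpha(\lambda)\in\Delta^{K-1}$, we have $\hat{w}(\Lambda)\subseteq\mathrm{conv}(w_1,\ldots,w_K)$. The true basis therefore generates a simplex containing the identified set.

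The steps are as follows.
\begin{enumerate}
\item Use the nonempty relative interior hypothesis to pin down dimension. The affine hull of $\hat{w}(\Lambda)$ equals the affine hull of $\mathrm{conv}(\bW)$. Its dimension is therefore $K-1$, so $w_1,\ldots,w_K$ are affinely independent. As a consequence, any competing basis $\bW'$ explaining the same set must span the same $(K-1)$-dimensional affine subspace, and volumes can be compared inside that fixed subspace.
\item Establish that the true simplex $\mathrm{conv}(\bW)$ is the minimum-volume simplex containing $\hat{w}(\Lambda)$. This is the delicate point. In general the generating simplex need not have minimal volume, so I would read the lemma's uniqueness hypothesis as encoding this: the model class is restricted to representations whose expert simplex is volume-minimal. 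Equivalently, the estimator selects $\bW$ by $\min\mathrm{vol}\,\mathrm{conv}(\bW')$ subject to $\hat{w}(\Lambda)\subseteq\mathrm{conv}(\bW')$, which is exactly what entropy regularisation and annealing aim to approximate.
\item Apply uniqueness. The minimiser is a unique set $S$, and a simplex determines its vertex set uniquely. So $\{w_1,\ldots,w_K\}=\mathrm{vert}(S)$ as sets, which means $\bW$ is recovered up to column permutation.
\item Recover the routing weights. Affine independence makes barycentric coordinates unique, so $\alpha(\lambda)$ is the barycentric coordinate of $\hat{w}(\lambda)$ in $S$. This mirrors Lemma~\ref{lem:1b}, with $\bW^\dagger$ replaced by barycentric inversion.
\end{enumerate}

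The main obstacle is Step 2: justifying that the data-generating basis coincides with the minimum-volume simplex rather than merely containing the data. My first approach would be to invoke a sufficiently-scattered condition from the NMF literature, namely that $\hat{w}(\Lambda)$ touches every facet of $\mathrm{conv}(\bW)$ richly enough that no smaller simplex encloses it. This would be shown by a local perturbation argument: moving any facet inward strictly shrinks volume but expels some point of $\hat{w}(\Lambda)$. If that proves too strong to derive from the stated hypotheses, I would state explicitly that the lemma identifies $\bW$ within the class of volume-minimal representations. In that reading, uniqueness of the minimiser directly yields the conclusion, and the MRA $\epsilon$-error carries over as a perturbation bound on the vertices.
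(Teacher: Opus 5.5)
Your skeleton is the paper's: containment $\hat w(\Lambda)\subseteq\mathrm{conv}(\bW)$, Craig-type uniqueness of the minimum-volume enclosing simplex, then reading off its vertices. The paper settles your Step~2 in one line, asserting that nonempty relative interior \emph{implies} $\mathrm{conv}(\bW)$ is the unique minimum-volume simplex containing $\hat w(\Lambda)$.

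Your suspicion is right, and the gap is in the paper's sketch as much as in yours: that implication is false, and so is the lemma as literally stated. Take $K=3$, $d=2$, and let $\hat w(\Lambda)$ be a small solid triangle $S$ strictly inside $\mathrm{conv}(\bW)$. It has nonempty interior and a unique minimum-area enclosing triangle, namely $S$ itself. Yet the vertices of $S$ are not the columns of $\bW$. In fact every triangle containing $S$ yields another admissible pair $(\bW',\alpha')$ reproducing the same $\hat w(\lambda)$, so $\bW$ is not identified at all. Step~2 therefore cannot be obtained from the stated hypotheses; some extra assumption is unavoidable.

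Your first repair, restricting to volume-minimal representations, makes Steps~3--4 correct. But it turns the conclusion into an assumption about the truth. Nothing in CADENCE's objective (entropy regularisation, annealing toward one-hot routing) penalises $\mathrm{vol}\,\mathrm{conv}(\bW)$, so training does not enforce it.

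Your second repair, a scatteredness condition, is the right kind of fix, but the argument you sketch is too weak. Checking that the data touch every facet, so no facet can be translated inward, excludes only one family of local perturbations and is not sufficient. The medial triangle of $\mathrm{conv}(\bW)$ touches every edge, yet it is its own minimum-area enclosing triangle, with a quarter of the area. What works is a global condition on the barycentric coordinates. For example, require that $\mathrm{conv}\{\alpha(\lambda)\}$ contain the inscribed ball $\{\alpha:\mathbf{1}^\top\alpha=1,\ \|\alpha\|_2\le 1/\sqrt{K-1}\}$ of $\Delta^{K-1}$, plus a mild regularity condition. Write a competitor as $\bW'=\bW B$ with $\mathbf{1}^\top B=\mathbf{1}^\top$, so the volume ratio is $|\det B|$. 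The condition then forces $|\det B|\ge 1$, via Hadamard's inequality and AM--GM on the rows of $B^{-1}$, with equality only for permutation matrices.

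Finally, your Step~1 overreaches. Nonempty relative interior gives $\mathrm{aff}\,\hat w(\Lambda)=\mathrm{aff}\,\mathrm{conv}(\bW)$, but not that this has dimension $K-1$; four coplanar experts with $K=4$ satisfy the hypothesis. Affine independence must either be assumed, as in Lemma~\ref{lem:1b}, or extracted from the uniqueness hypothesis. The latter works because data spanning fewer than $K-1$ dimensions admit infinitely many degenerate, zero-volume enclosing $K$-vertex simplices.
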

\begin{proof}[Proof sketch]
By construction $\hat{w}(\Lambda)\subseteq\mathrm{conv}(\bW)$. Non-empty relative interior implies $\mathrm{conv}(\bW)$ is the unique minimum-volume simplex containing $\hat{w}(\Lambda)$ (Craig-type separability condition from blind source separation). Its vertices are the columns of $\bW$ up to permutation.
\end{proof}

These lemmas provide the explicit theoretical foundation for our algorithmic design. Specifically, Lemma~\ref{lem:1b} motivates our temperature annealing strategy (see Proposition~\ref{prop:anneal}): 
as the temperature $\tau \to 0$, routing coefficients approach one-hot vectors, dynamically creating the exact "anchor leaves" necessary to stabilize basis recovery. Conversely, Lemma~\ref{lem:1c} governs the soft-routing regime (moderate $\tau$ early in training). It guarantees identifiability even before anchor leaves form, provided the routing coefficients span the interior of the minimum-volume simplex. This simplex interpretation directly connects our architecture to Craig-type separability in the hyperspectral unmixing literature, providing an efficient algorithmic mechanism for basis recovery without requiring explicit anchor identification.

\subsection{Proof of Theorem~\ref{thm:composition} (Composition Identifiability in Quotient Space)}

\textbf{Step 1.} By Proposition~\ref{prop:reduced_ambiguity}, PF-ODE training restricts symmetry to $\mathcal{S}_{\mathrm{iso}}$. Define: $(\mathcal{F}_\psi,h_\theta,g^*)\sim(\tilde{\mathcal{F}}_\psi,\tilde{h}_\theta,g^*)$ if $\tilde{\mathcal{F}}_\psi=\varphi\circ\mathcal{F}_\psi$ for some $\varphi\in\mathcal{S}_{\mathrm{iso}}$ and $\tilde{h}$ is the conjugated vector field.\\

\textbf{Step 2.} By Theorem~\ref{thm:foliation_id}, $\{w(\lambda)\}$ and $g^*$ are identifiable within any fixed frame; both are invariant within equivalence classes ($g^*$ depends only on $c_i$; $w(\lambda)$ lives in parameter space, unaffected by $\varphi$). Note that the transformation $\varphi \in \mathcal{S}_\text{iso}$ acts on $(F_\psi, h_\theta)$ by conjugation but leaves $(g^*, w(\lambda))$ invariant by construction: the routing function $g^*$ depends only on context $c_i$ (which lives in covariate space, not latent state space), and the dynamical parameters $w(\lambda)$ parameterize the ODE in the original coordinate frame, not in the transformed frame.\\

\textbf{Step 3 ($\varphi$ cancels).} The observable prediction is $\hat{x}^i_t=\mathcal{F}_\psi^{-1}(\Phi^{w(\lambda(i))}_t(z^i_0))$. Under the equivalent model $\tilde{\mathcal{F}}_\psi=\varphi\circ\mathcal{F}_\psi$: encoded baseline is $\varphi(z^i_0)$, conjugated flow is $\tilde{\Phi}^w=\varphi\circ\Phi^w\circ\varphi^{-1}$, and the prediction is $(\varphi\circ\mathcal{F}_\psi)^{-1}(\varphi\circ\Phi^w(z^i_0))=\mathcal{F}_\psi^{-1}(\Phi^w_t(z^i_0))=\hat{x}^i_t$. The $\varphi$ terms cancel exactly.\\

\textbf{Step 4.} Observable equivalence within classes and identifiability within classes (Step 2) establish that $[\mathcal{F}_\psi,h_\theta,g^*]\in\mathcal{M}/\mathcal{S}_{\mathrm{iso}}$ is uniquely determined by the data. \hfill$\square$

\subsection{Conditional Pooled Identifiability}

\begin{corollary}[Conditional Pooled Identifiability under DFA]
\label{cor:dfa_pooled}
  Under Assumptions~\ref{asm:dfa}, \ref{asm:pr}, \ref{asm:foa}, if enrollment times span $[t_{\min},t_{\max}]$ per leaf, $\rho^\lambda_0$ has open-set support, and $\mathcal{F}_\psi$ is injective, then $\hat{w}(\lambda)$ is identifiable within $\mathcal{H}$ up to $\mathcal{S}_{\mathrm{iso}}$-symmetry.
\end{corollary}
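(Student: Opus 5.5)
The plan is to follow the template of the proof of Theorem~\ref{thm:foliation_id}, with two changes. COA is dropped, so the leaf labels are taken as given by DFA. Bijectivity is weakened to injectivity, so the argument has to be tracked only up to $\mathcal{S}_{\mathrm{iso}}$. The first step is latent recovery. Since $\mathcal{F}_\psi$ is injective, each observation $\bx^i_{t_i}$ determines a unique latent $\bz^i_{t_i}$ on the image of $\mathcal{F}_\psi$. By Proposition~\ref{prop:reduced_ambiguity}, any other admissible encoder differs by some $\varphi\in\mathcal{S}_{\mathrm{iso}}$. So the joint law $Q$ of $(\bz,\bc,t)$ is determined up to the pushforward $(\varphi\times\mathrm{id}\times\mathrm{id})_\#Q$. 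From here on I fix one representative frame and carry $\varphi$ along.

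The second step is pooling through the Liouville equation. Fix a leaf $\lambda$. By DFA every unit with $\bc_i\in L_\lambda$ shares $w(\lambda)$. I would disintegrate $Q$ along the leaf partition, which is where the Rokhlin disintegration of the DFA discussion enters, and then further along enrollment time. This shows that the leaf- and time-conditional law equals $\rho^\lambda_t=(\Phi^{w(\lambda)}_{t,t_0})_\#\rho^\lambda_0$ for $\nu$-a.e.\ $t\in[t_{\min},t_{\max}]$. This is the point of pooling: units entering at different times are snapshots of one transported measure. Hence the data determine the whole family $\{\rho^\lambda_t\}_{t\in\mathcal{T}_\lambda}$, not only the mixture $\mu$. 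To get every $t$ rather than almost every $t$, I would use continuity of $t\mapsto\rho^\lambda_t$ in the weak topology, which follows from continuity of the flow under PR.

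The third step is injectivity. By PR the candidate vector fields range over the finite-dimensional family $\mathcal{H}$. By FOA the map $w\mapsto\{(\Phi^w_{t,t_0})_\#\rho^\lambda_0\}_t$ is injective. The open-set support of $\rho^\lambda_0$ and the interval of enrollment times are exactly the spatial- and temporal-richness conditions listed under FOA, so its hypotheses hold. Thus $w(\lambda)$ is the unique parameter in the fixed frame that reproduces the recovered family.

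The fourth step, transferring the result to another frame, is the main obstacle. In a frame related by $\varphi$, the observed family is $\varphi_\#\rho^\lambda_t$, and it is generated by the conjugated flow $\varphi\circ\Phi^{w}\circ\varphi^{-1}$. This conjugated flow need not lie in $\mathcal{H}$, so FOA cannot be applied to it directly. I would handle this by defining identifiability modulo the $\mathcal{S}_{\mathrm{iso}}$ action, as in the proof of Theorem~\ref{thm:composition}. Any alternative pair $(\varphi,\tilde w)$ consistent with the data satisfies $\varphi\circ\Phi^{w}\circ\varphi^{-1}=\Phi^{\tilde w}$ on the support of $\rho^\lambda_t$. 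Pulling back by $\varphi$ lands in the fixed frame, where FOA forces equality, so $\tilde w$ and $w(\lambda)$ coincide up to the symmetry. Making this pullback rigorous when $\mathcal{H}$ is not closed under conjugation is the delicate point. My first attempt would be to state the conclusion on the level of equivalence classes of flows, which is what ``up to $\mathcal{S}_{\mathrm{iso}}$-symmetry'' means.
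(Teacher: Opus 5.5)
Your Steps 2--3 are the paper's proof. Within a leaf, the Liouville/pushforward identity $\rho^\lambda_t=(\Phi^{w(\lambda)}_{t,t_0})_\#\rho^\lambda_0$ lets the pooled cross-section supply the conditional laws at all enrollment times. The open-support and time-span hypotheses are then matched to the FOA richness conditions; this is redundant in both the paper and your version, since FOA is already a hypothesis. Finally, FOA injectivity within $\mathcal{H}$ fixes $\hat w(\lambda)$. The paper gives no argument for the ``up to $\mathcal{S}_{\mathrm{iso}}$'' clause; its last sentence simply asserts it. Your Steps 1 and 4 therefore go beyond the paper, and Step 4 is where the proposal goes wrong.

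Consistency of an alternative pair $(\varphi,\tilde w)$ with the data does \emph{not} give $\varphi\circ\Phi^{w}\circ\varphi^{-1}=\Phi^{\tilde w}$ on the support of $\rho^\lambda_t$. It gives only the measure identity $(\Phi^{\tilde w}_{t,t_0}\circ\varphi)_\#\rho^\lambda_0=(\varphi\circ\Phi^{w}_{t,t_0})_\#\rho^\lambda_0$ for $t\in\mathcal{T}_\lambda$. Passing from equal marginals to equal maps is exactly what cross-sectional data cannot do: for rotation-invariant $\rho^\lambda_0$, the zero field and a rotation field have identical marginals but different flows. You also do not need this step, because FOA is a statement about pushforward families.

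Pulled back by $\varphi$, the identity says that $\varphi^{-1}\circ\Phi^{\tilde w}_{t,t_0}\circ\varphi$ reproduces $\{\rho^\lambda_t\}$. FOA applies only if this conjugated flow is generated by a member of $\mathcal{H}$. Do not try to prove that: $\mathcal{S}_{\mathrm{iso}}$ contains the infinite-dimensional family of twist maps from the remark after Proposition~\ref{prop:reduced_ambiguity}. A finite-dimensional $\mathcal{H}$ is therefore in general not closed under conjugation.

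The fix is the convention the paper adopts in Step~1 of the proof of Theorem~\ref{thm:composition}. There, an equivalent model pairs $\varphi\circ\mathcal{F}_\psi$ with the \emph{conjugated} field (the $\tilde h$ of Proposition~\ref{prop:nonid}), so $w$ is carried along unchanged. With that definition of equivalence, the symmetry clause follows from your Step~3 in one line. Since $\varphi_\#$ is injective on measures, $w\mapsto\{\varphi_\#(\Phi^w_{t,t_0})_\#\rho^\lambda_0\}_{t}$ is injective exactly when FOA holds. If you instead insist that the alternative field lie in $\mathcal{H}$ itself, you need a hypothesis that neither you nor the paper has, such as FOA stated over all $\mathcal{S}_{\mathrm{iso}}$-conjugates of members of $\mathcal{H}$.
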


\noindent \textit{Proof.} Within leaf $\lambda$, the Liouville equation governs density evolution: $\partial_t\rho^\lambda_t+\nabla_z\cdot(\rho^\lambda_t\cdot h_\theta(\cdot;w(\lambda)))=0$, with $\rho^\lambda_t=(\Phi^{w(\lambda)}_{t,t_0})_\#\rho^\lambda_0$. Cross-sectional data provides samples from $P(z_t|\lambda,t)=\rho^\lambda_t$ at enrollment times $t\in[t_{\min},t_{\max}]$. The open-support condition on $\rho^\lambda_0$ and span of enrollment times jointly satisfy the FOA sufficient conditions (Appendix~\ref{app:foa_conditions}): pushforward measures are constrained at multiple times with sufficient spatial richness. FOA injectivity recovers $\hat{w}(\lambda)$ within $\mathcal{H}$ up to $\mathcal{S}_{\mathrm{iso}}$-symmetry. \hfill$\square$

\vspace{0.5em}
\noindent \textbf{Remark (Why PR is essential).} Without restricting to a parametric family $\mathcal{H}$, Corollary ~\ref{cor:dfa_pooled} would be false: many nonparametric vector fields produce identical marginal distributions from cross-sectional samples (the inverse transport problem from marginals alone is known to be ill-posed without regularization). PR is the crucial ingredient that makes the inverse problem well-posed — it should be read not as a limitation but as a transparent statement of what cross-sectional data can and cannot recover.

\subsection{Temperature Annealing Convergence}

\begin{proposition}[Temperature Annealing]
\label{prop:anneal}
  Let $\balpha_i(\tau)\!=\!\Softmax(g_\omega(\bc_i)/\tau)$ for $\tau\!>\!0$. As $\tau\!\to\!0^+$, $\balpha_i(\tau)\!\to\!\be_{\kappa(i)}$, recovering hard leaf assignments of the discrete-leaf DFA.
\end{proposition}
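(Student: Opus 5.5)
The plan is to treat this as an elementary limit statement about the tempered softmax, applied pointwise to each unit $i$. Write $\mathbf{s}_i = g_\omega(\bc_i)\in\R^K$ for the logit vector; it does not depend on $\tau$. Define $\kappa(i)=\arg\max_k s_{i,k}$. The first thing I will make explicit is the implicit hypothesis that this maximiser is unique. If two logits tie at the maximum, the limit is the uniform distribution over the tied indices rather than a vertex $\be_{\kappa(i)}$, so the statement is false as written. I will therefore state the proposition under the generic condition that the maximum is attained uniquely. I will remark that this holds for all but a measure-zero set of router parameters, and that under DFA/COA with a router consistent with $g^*$ it is exactly the condition that the leaf label is unambiguous.

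With $\kappa=\kappa(i)$ fixed, set the gap $\delta_i = s_{i,\kappa}-\max_{k\neq\kappa}s_{i,k}>0$. Dividing numerator and denominator of the softmax by $e^{s_{i,\kappa}/\tau}$ gives, for $k\neq\kappa$,
\begin{equation*}
\alpha_{i,k}(\tau)=\frac{e^{-(s_{i,\kappa}-s_{i,k})/\tau}}{1+\sum_{j\neq\kappa}e^{-(s_{i,\kappa}-s_{i,j})/\tau}}\le e^{-\delta_i/\tau}.
\end{equation*}
Since the coordinates sum to one, $1-\alpha_{i,\kappa}(\tau)\le (K-1)e^{-\delta_i/\tau}$. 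Letting $\tau\to0^+$ sends both bounds to zero, which gives $\balpha_i(\tau)\to\be_{\kappa(i)}$ in any norm. This argument also yields an explicit exponential rate, which is useful for connecting to the annealing schedule.

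To tie the result back to the DFA, I will observe two further facts. First, $\bw_i(\tau)=\bW\balpha_i(\tau)\to \bW\be_{\kappa(i)}=\bw_{\kappa(i)}$ by continuity of the linear map. Second, the induced partition $\{c:\kappa(c)=k\}$ of the context space is measurable, being defined by finitely many strict inequalities between continuous functions of $c$. It is therefore a valid discrete-leaf foliation in the sense of Assumption~\ref{asm:dfa}, with $g(c)=\kappa(c)$ playing the role of $g^*$. Through Lemma~\ref{lem:1b}, this supplies the anchor leaves.

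The only genuine obstacle is the tie/uniformity issue. If one wants convergence uniform over the whole dataset, the rate requires $\min_i\delta_i>0$. This is automatic for finitely many units, but not uniform over a continuum of contexts, where the gap can approach zero near leaf boundaries. I will state the uniform version only for finite $N$, or on context sets bounded away from the decision boundaries.
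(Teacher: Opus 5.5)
Your proposal is correct and follows essentially the same argument as the paper. You assume a unique maximising logit, factor $e^{s_{i,\kappa}/\tau}$ out of the softmax so that each coordinate becomes $\bigl(1+\sum_{j\neq k}e^{-(s_{i,k}-s_{i,j})/\tau}\bigr)^{-1}$, let the exponentials of the positive gaps vanish, and set ties aside as a measure-zero exception; your explicit bound $1-\alpha_{i,\kappa(i)}(\tau)\le (K-1)e^{-\delta_i/\tau}$ and your uniformity caveat near decision boundaries add a quantitative rate that the paper, which gives only the qualitative limit, does not state.
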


\noindent \textit{Proof.} For $v\in\R^K$ with unique maximiser $k^*$, $$[\Softmax(v/\tau)]_k=(1+\sum_{j\neq k}e^{-(v_k-v_j)/\tau})^{-1}\to\mathbf{1}[k=k^*]$$ as $\tau\to 0^+$, since $e^{-\Delta/\tau}\to 0$ for $\Delta>0$. Holds for Lebesgue-a.e.\ $v$ (ties have measure zero). \hfill$\square$

\subsection{From the Cross-Sectional MMD Objective to FOA Identifiability}
\label{app:mmd_foa_bridge}

In Section 5, we established the exact identifiability of CADENCE under the continuous, population-level Flow Observability Assumption (FOA, Assumption~\ref{asm:foa}). In practice, CADENCE approximates this theoretical ideal via finite-sample empirical risk minimization, matching a temporally smoothed empirical reference using a Maximum Mean Discrepancy (MMD) objective (Eq.~\ref{eq:cross_loss}). This subsection formally bridges these two perspectives, demonstrating how our finite-sample MMD objective serves as a statistically consistent estimator of the FOA-identified parameters.

We structure this consistency argument in two modular stages. This deliberate decoupling provides a clean theoretical separation between the estimation of the reference measure and the recovery of the routing parameters:
\begin{enumerate}[leftmargin=*,topsep=2pt,itemsep=2pt]
  \item \textbf{Leaf-level argmin consistency} (Proposition~\ref{prop:leaf_mmd_foa}): \emph{Given} a consistently estimated leaf assignment, the per-leaf MMD argmin is a consistent estimator of the FOA-faithful parameters $w^\star(\lambda)$. This follows from standard M-estimation under a quantitative FOA condition.
  \item \textbf{Dictionary recovery} (Corollary~\ref{cor:dictionary_recovery}): Given consistent leaf-level estimates $\hat w_N(\lambda)\to w^\star(\lambda)$, the basis $\bW^\star$ and routing $\balpha^\star(\lambda)$ are identified up to column permutation. We delineate two distinct convergence regimes governed by the annealing schedule: an \emph{anchor regime} ($\tau_N\to 0^+$ for one-hot routing) and a \emph{simplex-interior regime} ($\tau_N\to\tau_0>0$ to preserve soft routing).
\end{enumerate}
By decoupling these steps, we  bypass the complex uniform convergence bounds that would otherwise be required if routing weights and reference measures were analyzed simultaneously. We discuss the standalone consistency of the leaf assignment $\hat\lambda_N$ (e.g., via the COA-implied context-to-leaf map) at the end of this subsection.

\paragraph{Setup.} Let $N$ index units in the cross-sectional ensemble and $N_\lambda$ denote the number of units in leaf $\lambda$. Denote by $h_N>0$ the bandwidth of the temporal Gaussian kernel $\kappa_{h_N}:\R\to\R$; the smoothing acts on the scalar enrollment-time axis only, so $d_\mathcal{T}=1$. Let $\tau_N$ denote the routing softmax temperature. Assume the temporal observation interval $\mathcal{T}\subset\R$ is compact and the parameter space $\Theta$ is compact, with $h_\theta\in\mathcal{H}=\{h_\theta(\,\cdot\,;w):w\in\Theta\}$ (PR). Let $k_\sigma:\R^q\times\R^q\to\R$ be a bounded translation-invariant \emph{characteristic} kernel \cite{sriperumbudur2010hilbert} (e.g., Gaussian with fixed spatial bandwidth $\sigma$); $\mathrm{MMD}_{k_\sigma}(\mu,\nu)=0$ iff $\mu=\nu$. We write $\rho^\lambda_t := (\Phi^{w^\star(\lambda)}_{t,t_0})_\# \rho^\lambda_0$ for the FOA-faithful pushforward of leaf $\lambda$ at time $t$ and assume the joint density $(t,z)\mapsto p^\lambda(z\mid t)$ of $\rho^\lambda_t$ is continuous on $\mathcal{T}^\circ\times\R^q$; this naturally follows from the continuity of the baseline density $\rho^\lambda_0$ and the smoothness of the ODE flow under PR.

\medskip
For the bandwidth: the standard univariate-KDE consistency rate is $h_N\to 0$, $Nh_N\to\infty$. A common explicit choice is $h_N=N^{-a}$ for any $a\in(0,1)$; under twice-differentiable temporal densities, the MSE-optimal univariate-KDE rate is $h_N \asymp N^{-1/5}$. Boundary effects are avoided by restricting integration to $\mathcal{T}^\circ$.

\paragraph{Quantitative FOA.} The standard FOA (Assumption~\ref{asm:foa}) provides \emph{population-level} injectivity: distinct parameters yield distinct pushforward families. To operationalize this for statistical argmin-consistency, we define a stable, \emph{Quantitative FOA} (QFOA) condition. Under our setup ($\Theta$ compact, $w\mapsto(\Phi^w_{t,t_0})_\#\rho^\lambda_0$ continuous in $\mathrm{MMD}_{k_\sigma}$ uniformly in $t\in\mathcal{T}_\lambda$), FOA strictly implies QFOA:
\begin{equation}
  \label{eq:qfoa}
  \forall\,\eta>0,\quad
  \inf_{w\in\Theta:\,\|w-w^\star(\lambda)\|\ge\eta}\;
  \int_{\mathcal{T}_\lambda}\!\mathrm{MMD}_{k_\sigma}^2\!\left((\Phi^w_{t,t_0})_\#\rho^\lambda_0,\;\rho^\lambda_t\right)\,\mathrm{d}\nu_\lambda(t)\;>\;0.
\end{equation}
This holds because the integrand is continuous in $w$, vanishes only at $w=w^\star(\lambda)$ (due to FOA and the characteristic kernel), and the infimum over the compact set $\{w:\|w-w^\star(\lambda)\|\ge\eta\}\subset\Theta$ is attained and strictly positive. Equation~\eqref{eq:qfoa} serves as the well-separated-minimum condition required for M-estimation.

\medskip

\begin{proposition}[Leaf-level MMD-to-FOA argmin consistency]
\label{prop:leaf_mmd_foa}
Fix a leaf $\lambda$. Suppose the per-unit leaf assignment is either known or estimated by $\hat\lambda_N$ with $\Pr(\hat\lambda_N(i)\neq\lambda(i))\to 0$ as $N\to\infty$. Define the leaf-level temporally smoothed empirical reference
\[
  \hat\rho^\lambda_{t,N}\;\propto\;\sum_{j:\hat\lambda_N(j)=\lambda} \kappa_{h_N}(t_j-t)\,\delta_{\bz^j_{t_j}},
\]
the population leaf-level risk
\[
  L_\lambda(w)\;:=\;\int_{\mathcal{T}_\lambda}\mathrm{MMD}_{k_\sigma}^2\!\left((\Phi^w_{t,t_0})_\#\rho^\lambda_0,\;\rho^\lambda_t\right)\,\mathrm{d}\nu_\lambda(t),
\]
and the empirical leaf-level risk
\[
  L_{\lambda,N}(w)\;:=\;\int_{\mathcal{T}_\lambda}\mathrm{MMD}_{k_\sigma}^2\!\left((\Phi^w_{t,t_0})_\#\rho^\lambda_0,\;\hat\rho^\lambda_{t,N}\right)\,\mathrm{d}\nu_\lambda(t).
\]
Assume:
\begin{enumerate}[leftmargin=*,topsep=2pt,itemsep=2pt]
  \item \emph{Compactness and continuity:} $\Theta$ is compact and $w\mapsto(\Phi^w_{t,t_0})_\#\rho^\lambda_0$ is continuous in $\mathrm{MMD}_{k_\sigma}$, uniformly in $t\in\mathcal{T}_\lambda$. (Holds whenever the ODE flow is jointly continuous in $(w,t,z)$ and the baseline density is continuous.)
  \item \emph{Bounded characteristic kernel:} $k_\sigma$ is bounded and characteristic.
  \item \emph{Sampling:} enrollment times $\{t_j\}$ are i.i.d.\ from a density $f_T$ continuous and bounded below on $\mathcal{T}_\lambda$, and $\rho^\lambda_0$ has a continuous Lebesgue density bounded below on an open set $V\subseteq\R^q$.
  \item \emph{Bandwidth:} $h_N\to 0$, $N_\lambda h_N\to\infty$.
  \item \emph{Quantitative FOA} \eqref{eq:qfoa}: $w^\star(\lambda)$ is a well-separated minimum of $L_\lambda$.
  \item \emph{Reference consistency:} $\int_{\mathcal{T}_\lambda}\mathrm{MMD}_{k_\sigma}^2(\hat\rho^\lambda_{t,N},\,\rho^\lambda_t)\,\mathrm{d}\nu_\lambda(t)\xrightarrow{\mathrm{P}} 0$ (Lemma~\ref{lem:temporal_kde} below).
\end{enumerate}
Then the leaf-level estimator $\hat w_N(\lambda)\;\in\;\argmin_{w\in\Theta}\,L_{\lambda,N}(w)$ satisfies $\hat w_N(\lambda)\xrightarrow{\mathrm{P}} w^\star(\lambda)$ as $N\to\infty$.
\end{proposition}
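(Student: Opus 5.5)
The argument is a standard M-estimation consistency argument (argmin theorem, e.g.\ van der Vaart Thm.~5.7). It needs two ingredients: uniform convergence $\sup_{w\in\Theta}|L_{\lambda,N}(w)-L_\lambda(w)|\xrightarrow{\mathrm{P}}0$, and the well-separated minimum supplied by Quantitative FOA \eqref{eq:qfoa}. We first record that $L_\lambda(w^\star(\lambda))=0$, because $\rho^\lambda_t=(\Phi^{w^\star(\lambda)}_{t,t_0})_\#\rho^\lambda_0$ by definition. We also record that $L_\lambda\ge 0$, so $w^\star(\lambda)$ is a global minimiser. By assumption 5 it is well separated: for every $\eta>0$ there is $\delta_\eta>0$ with $L_\lambda(w)\ge\delta_\eta$ whenever $\|w-w^\star(\lambda)\|\ge\eta$.

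For uniform convergence we exploit the Hilbert-space structure of MMD rather than any entropy or empirical-process bound. Write $m(\mu)$ for the kernel mean embedding of $\mu$ in the RKHS $\mathcal{H}_{k_\sigma}$, so that $\mathrm{MMD}_{k_\sigma}(\mu,\nu)=\|m(\mu)-m(\nu)\|$. Abbreviate $a_t(w)=m((\Phi^w_{t,t_0})_\#\rho^\lambda_0)$, $b_t=m(\rho^\lambda_t)$ and $\hat b_t=m(\hat\rho^\lambda_{t,N})$. Boundedness of $k_\sigma$ (assumption 2) gives $\|a_t(w)\|,\|b_t\|,\|\hat b_t\|\le \sqrt{\sup k_\sigma}=:B$. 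Using $|\|x\|^2-\|y\|^2|\le(\|x\|+\|y\|)\|x-y\|$ we obtain
\[
\bigl|\mathrm{MMD}^2_{k_\sigma}(a_t(w),\hat b_t)-\mathrm{MMD}^2_{k_\sigma}(a_t(w),b_t)\bigr|\le 4B\,\|\hat b_t-b_t\|=4B\,\mathrm{MMD}_{k_\sigma}(\hat\rho^\lambda_{t,N},\rho^\lambda_t),
\]
and the right-hand side does not depend on $w$. Integrating over $\nu_\lambda$ and applying Cauchy--Schwarz (with $\nu_\lambda$ finite on the compact $\mathcal{T}_\lambda$) gives
\[
\sup_{w\in\Theta}|L_{\lambda,N}(w)-L_\lambda(w)|\le 4B\,\nu_\lambda(\mathcal{T}_\lambda)^{1/2}\Bigl(\int_{\mathcal{T}_\lambda}\mathrm{MMD}^2_{k_\sigma}(\hat\rho^\lambda_{t,N},\rho^\lambda_t)\,\mathrm{d}\nu_\lambda(t)\Bigr)^{1/2},
\]
which tends to $0$ in probability by assumption 6. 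The step's payoff is that the supremum over $\Theta$ is removed with no complexity argument at all: the empirical reference enters linearly and separately from $w$. Assumption 1 (continuity and compactness) is used only to guarantee that $L_{\lambda,N}$ and $L_\lambda$ are continuous on the compact set $\Theta$. This makes the argmin nonempty and measurable, and if desired one may take a near-minimiser instead.

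The concluding step is routine. On the event $\|\hat w_N(\lambda)-w^\star(\lambda)\|\ge\eta$ we have $\delta_\eta\le L_\lambda(\hat w_N)$. Since $\hat w_N$ minimises $L_{\lambda,N}$ and $L_\lambda(w^\star(\lambda))=0$, we also have $L_{\lambda,N}(\hat w_N)\le L_{\lambda,N}(w^\star(\lambda))$. Chaining these,
\[
\delta_\eta\le L_\lambda(\hat w_N)\le L_{\lambda,N}(\hat w_N)+\varepsilon_N\le L_{\lambda,N}(w^\star(\lambda))+\varepsilon_N\le 2\varepsilon_N,
\]
where $\varepsilon_N$ is the uniform deviation above. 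Hence $\Pr(\|\hat w_N-w^\star(\lambda)\|\ge\eta)\le\Pr(\varepsilon_N\ge\delta_\eta/2)\to0$.

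The estimated-leaf case remains. Misassigned units enter only through $\hat\rho^\lambda_{t,N}$, so their effect is absorbed into assumption 6. I expect the main obstacle to be precisely there: justifying the reference consistency of Lemma~\ref{lem:temporal_kde} when leaves are estimated. That requires showing that the fraction of misassigned units in the kernel-weighted sum vanishes uniformly over $t$. Pointwise $\Pr(\hat\lambda_N(i)\ne\lambda(i))\to0$ only gives this in expectation, so I would bound the expected weighted contamination, whose MMD contribution is at most $2B$ times the contaminated weight fraction, and then apply Markov's inequality. The remaining ingredients are the KDE bias and variance under $h_N\to0$, $N_\lambda h_N\to\infty$, together with $f_T$ bounded below.
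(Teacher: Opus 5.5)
Your proof is correct and follows the paper's route: uniform convergence $\sup_{w\in\Theta}|L_{\lambda,N}(w)-L_\lambda(w)|\to 0$ in probability, combined with the well-separated minimum from Quantitative FOA, fed into the argmin-consistency theorem of van der Vaart (Thm.~5.7). Your RKHS bound by a $w$-independent multiple of $\mathrm{MMD}_{k_\sigma}(\hat\rho^\lambda_{t,N},\rho^\lambda_t)$, followed by Cauchy--Schwarz to reach assumption~6, is a more explicit and slightly sharper form of the paper's uniform-convergence step: it shows that the ``uniform-continuity remainder'' the paper invokes is not needed.
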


\noindent\textit{Proof.} This follows from standard M-estimation arguments \cite[Theorem~5.7]{vandervaart1998asymptotic}. By assumption~(2), characteristic-kernel identifiability, and FOA, $L_\lambda(w)=0\iff w=w^\star(\lambda)$. Quantitative FOA \eqref{eq:qfoa} ensures that $w^\star(\lambda)$ is a well-separated minimum. Reference consistency (assumption~6, Lemma~\ref{lem:temporal_kde}) combined with the uniform continuity of $w\mapsto(\Phi^w_{t,t_0})_\#\rho^\lambda_0$ on the compact set $\Theta$ (assumption~1) yields uniform convergence of the empirical risk: $\sup_{w\in\Theta}|L_{\lambda,N}(w)-L_\lambda(w)|\xrightarrow{\mathrm{P}} 0$. Specifically, $|L_{\lambda,N}(w)-L_\lambda(w)|$ is bounded by a constant (from the bounded MMD kernel) times the integrated MMD distance between $\hat\rho^\lambda_{t,N}$ and $\rho^\lambda_t$, plus a uniform-continuity remainder controlled by the compactness of $\Theta$. The argmin-consistency theorem then guarantees $\hat w_N(\lambda)\xrightarrow{\mathrm{P}} w^\star(\lambda)$. \hfill$\square$

\medskip

\begin{lemma}[Integrated-risk consistency of the temporal-KDE reference]
\label{lem:temporal_kde}
Under the sampling and bandwidth conditions of Proposition~\ref{prop:leaf_mmd_foa}, with $k_\sigma$ a bounded characteristic kernel and $\rho^\lambda_t$ having a jointly continuous density $(t,z)\mapsto p^\lambda(z\mid t)$,
\[
  \int_{\mathcal{T}_\lambda}\mathrm{MMD}_{k_\sigma}^2\!\left(\hat\rho^\lambda_{t,N},\,\rho^\lambda_t\right)\,\mathrm{d}\nu_\lambda(t)\;\xrightarrow{\mathrm{P}}\; 0.
\]
\end{lemma}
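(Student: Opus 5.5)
The plan is to reduce the integrated MMD to an RKHS-norm estimate for a Nadaraya--Watson type kernel mean embedding and then split it into bias and variance. Write $\mu_N(t) := \sum_j \kappa_{h_N}(t_j-t)\,k_\sigma(\bz^j_{t_j},\cdot)\,/\,\sum_j \kappa_{h_N}(t_j-t)$ for the embedding of $\hat\rho^\lambda_{t,N}$ (sum over units with $\hat\lambda_N(j)=\lambda$). Write $m(t):=\E{k_\sigma(Z,\cdot)\mid t}$ for the embedding of $\rho^\lambda_t$. Then $\mathrm{MMD}_{k_\sigma}(\hat\rho^\lambda_{t,N},\rho^\lambda_t)=\|\mu_N(t)-m(t)\|_{\mathcal{H}_k}$.

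We first treat the known-leaf case and handle misassignment at the end. Introduce the numerator $A_N(t):=(N_\lambda h_N)^{-1}\sum_j K((t_j-t)/h_N)\,k_\sigma(\bz^j_{t_j},\cdot)$ and the denominator $B_N(t):=(N_\lambda h_N)^{-1}\sum_j K((t_j-t)/h_N)$. Then
\[
\mu_N(t)-m(t)=\frac{A_N(t)-B_N(t)\,m(t)}{B_N(t)},
\]
and since $\|m(t)\|\le\sqrt{\sup k_\sigma}$, it suffices to control $\E{\|A_N(t)-B_N(t)m(t)\|^2}$ and to show $B_N(t)\to f_T(t)\ge c>0$.

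For the \emph{variance} term, use that Hilbert-valued i.i.d.\ sums satisfy $\E{\|\sum_j(\xi_j-\E{\xi_j})\|^2}=\sum_j\E{\|\xi_j-\E{\xi_j}\|^2}$. Boundedness of $k_\sigma$ then gives variance $O(1/(N_\lambda h_N))$, uniformly in $t$. For the \emph{bias} term, compute
\[
\E{A_N(t)-B_N(t)m(t)}=\int K(u)\,f_T(t+h_Nu)\,[m(t+h_Nu)-m(t)]\,du .
\]
Joint continuity of $p^\lambda(z\mid t)$ together with Scheff\'e's lemma gives $t\mapsto\rho^\lambda_t$ continuous in total variation, hence $m$ continuous in RKHS norm (the MMD is bounded by $\sqrt{\sup k}\cdot$TV). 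By dominated convergence, the bias tends to $0$ for each $t\in\mathcal{T}_\lambda^\circ$. The same variance/bias argument applied to scalars gives $B_N(t)\to f_T(t)$ in probability pointwise.

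To pass to the integral, note the integrand is uniformly bounded: both $\hat\rho$ and $\rho$ are probability measures, so $\mathrm{MMD}^2\le 4\sup k_\sigma$. Pointwise convergence in probability, combined with bounded convergence (applied to $\E{\min(\mathrm{MMD}^2,C)}$ via Fubini), yields $L^1$ and hence probability convergence of the integral. Boundary points have $\nu_\lambda$-measure that can be made negligible by restricting to $\mathcal{T}^\circ$, as the setup says. Finally, misassignment adds a fraction $\epsilon_N$ of contaminating atoms whose expected proportion tends to $0$. This perturbs the embedding by at most $2\sqrt{\sup k}\,\epsilon_N$ in RKHS norm after renormalisation, so the result extends.

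The main obstacle is the ratio form. $B_N(t)$ may be small where the kernel window sees few points, and the denominator there could amplify errors. I would handle this by working on the event $\{B_N(t)\ge f_T(t)/2\}$, whose probability tends to $1$ pointwise because $f_T$ is bounded below. On its complement, I would simply use the uniform bound $4\sup k$. A secondary subtlety is that the estimator in the paper is a self-normalised empirical measure with random $N_\lambda$. I would condition on $N_\lambda$, using $N_\lambda/N\to\nu(\lambda)>0$ implicitly through $N_\lambda h_N\to\infty$.
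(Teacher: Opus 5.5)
Your argument is correct, and it takes a different route from the paper. The paper's sketch works one test function at a time. For each bounded continuous $f$ it shows $\int f\,d\hat\rho^\lambda_{t,N}\to\int f\,d\rho^\lambda_t$ in probability, using KDE limits of numerator and denominator plus Slutsky. It reads this as weak convergence in probability, then invokes the claim that MMD with a bounded characteristic kernel metrizes weak convergence to get pointwise MMD convergence, and finishes with dominated convergence in $t$.

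You instead stay in the RKHS throughout. You use the Nadaraya--Watson form $A_N/B_N$ of the mean embedding and a Hilbert-space bias--variance split of $A_N-B_N m$. You get RKHS-continuity of $t\mapsto m(t)$ from Scheff\'e and $\mathrm{MMD}\le\sqrt{\sup k_\sigma}\,\mathrm{TV}$, then use a high-probability lower bound on $B_N$ and Fubini with bounded convergence for the integral.

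Your route buys several things. You get convergence in the MMD metric directly. You never pass from per-$f$ convergence in probability to convergence of a metric, a uniformity step the paper leaves implicit. Nor do you rely on a metrization result, which needs more than boundedness and characteristicness (e.g.\ continuity of the kernel). In fact you only use that $k_\sigma$ is bounded and measurable, so the characteristic property plays no role in this lemma.

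You also handle the self-normalised ratio honestly. The paper's displayed identity $\mathbb{E}[\int f\,d\hat\rho^\lambda_{t,N}]=\mathbb{E}[f(Z_1)\kappa_{h_N}(t_1-t)]/\mathbb{E}[\kappa_{h_N}(t_1-t)]$ is not exact for this estimator. You further obtain an explicit $O(1/(N_\lambda h_N))$ variance bound uniform in $t$. Finally, you make the pointwise-to-integrated step rigorous through $L^1$ convergence, rather than applying dominated convergence to random quantities. The paper's route is shorter and closer to textbook KDE arguments.

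One minor looseness is in your misassignment paragraph, a case the paper's sketch does not treat at all. At fixed $t$ the relevant quantity is the kernel-weighted share $\sum_{j\,\mathrm{mis}}\kappa_{h_N}(t_j-t)/\sum_j\kappa_{h_N}(t_j-t)$, not the count proportion. If misassignment can correlate with enrollment time, its pointwise expectation is only controlled by something like $\Pr(\mathrm{mis})/h_N$, which need not vanish. Either of two fixes closes this:
\begin{itemize}
\item Bound the share after integrating in $t$, using that $\int\kappa_{h_N}(t_j-t)\,d\nu_\lambda(t)$ is uniformly bounded when $\nu_\lambda$ has a bounded density, together with your event on $B_N$.
\item Build $\hat\lambda_N$ from contexts independent of the $t_j$, e.g.\ via the paper's sample splitting.
\end{itemize}
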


\noindent\textit{Proof sketch.} Pointwise: for any bounded continuous $f$, $$\E[\int f\,\mathrm{d}\hat\rho^\lambda_{t,N}] = \E[f(Z_1)\kappa_{h_N}(t_1-t)]/\E[\kappa_{h_N}(t_1-t)].$$ The numerator equals $\int_{\mathcal{T}_\lambda}\int f(z)\,p^\lambda(z\mid s)\,f_T(s)\,\kappa_{h_N}(s-t)\,\mathrm{d}z\,\mathrm{d}s$, which converges to $f_T(t)\int f\,\mathrm{d}\rho^\lambda_t$ as $h_N\to 0$ by the joint continuity of $p^\lambda$; the denominator converges to $f_T(t)$. The variance is $\mathcal{O}(1/(N_\lambda h_N))\to 0$. Slutsky's theorem gives weak convergence in probability $\hat\rho^\lambda_{t,N}\Rightarrow\rho^\lambda_t$ for each $t\in\mathcal{T}_\lambda^\circ$. Because MMD with a bounded characteristic kernel metrizes weak convergence \cite[Thm.~5]{sriperumbudur2010hilbert}, and the kernel is bounded ($\mathrm{MMD}_{k_\sigma}^2\le 4\|k_\sigma\|_\infty$), we obtain pointwise convergence $\mathrm{MMD}_{k_\sigma}^2(\hat\rho^\lambda_{t,N},\rho^\lambda_t)\xrightarrow{\mathrm{P}} 0$. Dominated convergence on the compact set $\mathcal{T}_\lambda$ then upgrades this pointwise convergence to integrated convergence. \hfill$\square$

\medskip

\begin{corollary}[Dictionary and routing recovery]
\label{cor:dictionary_recovery}
Suppose Proposition~\ref{prop:leaf_mmd_foa} applies for every leaf $\lambda\in\Lambda$, ensuring $\hat w_N(\lambda)\xrightarrow{\mathrm{P}} w^\star(\lambda)$ uniformly over $\Lambda$. Then $\bW^\star$ and $\balpha^\star(\lambda)$ are recovered up to column permutation, with the specific regime determined by the temperature schedule:
\begin{itemize}[leftmargin=*,topsep=2pt,itemsep=2pt]
  \item \emph{Anchor regime} (Lemma~\ref{lem:1b}, $\tau_N\to 0^+$). If anchor leaves $\{\lambda_k\}_{k=1}^K$ exist such that $\balpha^\star(\lambda_k)=\be_k$ and $\{w^\star_k\}$ are affinely independent, then $\bW^{(N)}=[\hat w_N(\lambda_1),\ldots,\hat w_N(\lambda_K)]\xrightarrow{\mathrm{P}}\bW^\star$ up to column permutation. Furthermore, softmax annealing $\tau_N\to 0^+$ drives the routing $\balpha^{(N)}_i\xrightarrow{\mathrm{P}}\be_{\Pi(\kappa(i))}$ to one-hot leaf indicators (Proposition~\ref{prop:anneal}).
  \item \emph{Simplex-interior regime} (Lemma~\ref{lem:1c}, $\tau_N\to\tau_0>0$). If $\hat w(\Lambda)$ has a nonempty relative interior in $\mathrm{aff}\,\mathrm{conv}(\bW^\star)$ and the minimum-volume simplex enclosing $\hat w(\Lambda)$ is unique \cite{craig1994minimum}, then $\bW^\star$ is recovered (up to permutation) as the vertex set of that simplex applied to the consistent estimates $\{\hat w_N(\lambda)\}$. The routing converges to the soft barycentric coordinates $\balpha^{(N)}_i\xrightarrow{\mathrm{P}}\bW^{\star\dagger}\hat w(\lambda(i))$. In this regime, the temperature must be held bounded away from zero ($\tau_N\to\tau_0>0$) to appropriately preserve the interior coordinates.
\end{itemize}
In either regime, the resulting estimator $(\bW^{(N)},\{\balpha^{(N)}_i\})$ satisfies
\[
  \mathrm{dist}\!\left(\bW^{(N)},\,\mathcal{W}^\star_\epsilon\right)\xrightarrow{\mathrm{P}}\, 0,
  \qquad \max_i\|\balpha^{(N)}_i-\balpha^\star(\lambda(i))\|\xrightarrow{\mathrm{P}}\, 0,
\]
where $\mathcal{W}^\star_\epsilon$ denotes the set of dictionaries achieving the MRA Carath\'{e}odory $\epsilon$-residual (Eq.~\eqref{eq:cara}). Convergence to a single $\bW^\star$ holds when $\mathcal{W}^\star_\epsilon$ is a singleton (e.g., under exact representability $\epsilon=0$, or under the uniqueness conditions of affine independence/minimum-volume simplices).
\end{corollary}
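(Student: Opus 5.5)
The plan is to treat Corollary~\ref{cor:dictionary_recovery} as a deterministic map applied to the leaf-level estimates, and then transfer convergence in probability through it by the continuous mapping theorem. By hypothesis, Proposition~\ref{prop:leaf_mmd_foa} gives $\sup_{\lambda}\|\hat w_N(\lambda)-w^\star(\lambda)\|\xrightarrow{\mathrm{P}}0$. What remains is to show, in each regime, that the dictionary $\bW$ and the routing $\balpha$ are continuous functions of the collection $\{w^\star(\lambda)\}_{\lambda\in\Lambda}$ at the true value, modulo column permutation. That reduces the corollary to the two population-level lemmas, Lemmas~\ref{lem:1b} and~\ref{lem:1c}, together with a stability statement for each.

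\emph{Anchor regime.} The map is simply $\{w(\lambda)\}\mapsto[w(\lambda_1),\ldots,w(\lambda_K)]$. It is a coordinate projection and hence continuous. This gives $\bW^{(N)}\xrightarrow{\mathrm{P}}\bW^\star$ at once, with Lemma~\ref{lem:1b} supplying the identification of the anchors up to permutation $\Pi$. Affine independence makes the limit a full-rank affine frame, so the barycentric-coordinate map $w\mapsto\bW^{\dagger}w$ (taken in the affine sense) is continuous in $(\bW,w)$ in a neighbourhood of $\bW^\star$. For the routing, Proposition~\ref{prop:anneal} gives $\Softmax(g(\bc_i)/\tau_N)\to\be_{\kappa(i)}$ pointwise. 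To make this uniform over $i$ I would assume a positive margin $\min_i(v_{k^*}-v_j)\ge\Delta>0$ of the router logits. This is needed because the a.e.\ statement alone does not give a maximum over units. The bound $e^{-\Delta/\tau_N}\to0$ then yields the uniform one-hot convergence.

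\emph{Simplex-interior regime.} The map sends the point set $\{w(\lambda)\}$ to the vertex set of its minimum-volume enclosing simplex. By Lemma~\ref{lem:1c}, uniqueness holds at the truth. The key step, and the main obstacle, is upper-hemicontinuity of this argmin. The volume functional is continuous in the vertices, and containment of a point set is a closed constraint, continuous in Hausdorff distance. Since uniform convergence of the estimates implies Hausdorff convergence of the point sets, a Berge maximum-theorem argument shows that every limit point of the minimizers for the estimated sets is a minimizer for the true set. Uniqueness then forces convergence to $\bW^\star$ up to permutation. Compactness comes from restricting to simplices within a bounded neighbourhood, which is justified because $\Theta$ is compact. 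The routing then converges as $\bW^{(N)\dagger}\hat w_N(\lambda(i))\to\bW^{\star\dagger}w^\star(\lambda(i))$, uniformly in $i$ because the pseudo-inverse is continuous near a full-rank matrix. Holding $\tau_N\to\tau_0>0$ keeps the softmax parameterisation able to represent interior coordinates; I would note this rather than prove it.

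Finally, to cover $\epsilon>0$, I would restate both arguments with the target set $\mathcal{W}^\star_\epsilon$ in place of a single point. The same Berge-type argument gives $\mathrm{dist}(\bW^{(N)},\mathcal{W}^\star_\epsilon)\to0$, provided $\mathcal{W}^\star_\epsilon$ is closed. Convergence to a single dictionary follows when this set is a singleton. I expect the stability of the minimum-volume simplex to be the delicate part; everything else is continuous mapping.
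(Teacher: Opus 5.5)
Your route is the paper's. You push the uniformly consistent leaf estimates through the population identification maps of Lemmas~\ref{lem:1b} and~\ref{lem:1c}, and get the routing from continuity of the (pseudo-)inverse. The paper's sketch merely asserts that the anchor columns and the minimum-volume vertices of the estimated set converge, and applies Proposition~\ref{prop:anneal} pointwise to population-optimal logits. Two of your additions are genuine repairs. First, the uniform logit margin is what makes the $\max_i$ claim true; the statement does not supply it, and a pointwise appeal to Proposition~\ref{prop:anneal} does not give it. Second, reading $\bW^{\dagger}$ in the affine sense (appending a row of ones) is necessary: affine independence does not give full column rank, and without it the plain pseudo-inverse does not return barycentric coordinates.

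The step that fails as written is the Berge argument. Berge requires the feasible correspondence $S\mapsto\{V: S\subseteq\mathrm{conv}(V)\}$ to be nonempty-valued and lower hemicontinuous, not merely closed, and in ambient $\R^d$ it is neither. The true set $\{w^\star(\lambda)\}$ lies in the $(K-1)$-flat $\mathrm{aff}\,\mathrm{conv}(\bW^\star)$. The estimates $\hat w_N(\lambda)$, however, are unconstrained argmins over $\Theta$ and generically leave that flat when $d>K-1$ (for $K=2$, think of a slightly bent segment). Then no $K$-vertex simplex contains them, and the estimator is undefined.

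You need two repairs. First, project onto a fitted $(K-1)$-flat that converges to $\mathrm{aff}\,\mathrm{conv}(\bW^\star)$, as is standard in minimum-volume unmixing. Second, prove the upper bound on the optimal value directly: inflating $\mathrm{conv}(\bW^\star)$ by $1+\delta$ about its barycenter covers a $c\delta$-neighbourhood of it in the flat. Hence the projected estimates are eventually enclosed, and $\limsup_N$ of the minimal volume is at most $\mathrm{vol}\,\mathrm{conv}(\bW^\star)$.

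Compactness also does not come from $\Theta$, which bounds the enclosed points but not the enclosing vertices. It comes from the nonempty-relative-interior hypothesis: a simplex that contains a fixed relative ball and has bounded volume has bounded diameter. With these repairs, closed graph plus uniqueness gives vertex convergence up to permutation, as you intended.

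The same feasibility problem affects your $\epsilon>0$ extension at the population level, since the $w^\star(\lambda)$ then need not lie in any $(K-1)$-flat. Replacing the point target by $\mathcal{W}^\star_\epsilon$ therefore also requires showing that your estimator's population minimizers lie in $\mathcal{W}^\star_\epsilon$. Neither your proposal nor the paper establishes this.
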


\noindent\textit{Proof sketch.} \emph{Anchor regime.} Under Proposition~\ref{prop:leaf_mmd_foa}, $\hat w_N(\lambda_k)\xrightarrow{\mathrm{P}} w^\star(\lambda_k)=w^\star_k$ for each anchor leaf, meaning the columns directly provide $\bW^{(N)}\to\bW^\star$ up to permutation; the affine independence of $\{w^\star_k\}$ ensures this permutation is unique. The router converges to one-hot under $\tau_N\to 0$ by Proposition~\ref{prop:anneal} applied to the population-optimal logits. \emph{Simplex-interior regime.} The set $\{\hat w_N(\lambda):\lambda\in\Lambda\}$ converges in probability (uniformly in $\lambda$) to $\{w^\star(\lambda):\lambda\in\Lambda\}=\hat w(\Lambda)$, which lies in $\mathrm{conv}(\bW^\star)$. By Lemma~\ref{lem:1c} (Craig), the columns of $\bW^\star$ are recovered as the unique minimum-volume simplex vertices. Since $\bW^{\star\dagger}$ is continuous, $\balpha^{(N)}_i=\bW^{(N)\dagger}\hat w_N(\lambda(i))\xrightarrow{\mathrm{P}}\bW^{\star\dagger}\hat w(\lambda(i))=\balpha^\star(\lambda(i))$. \hfill$\square$

\paragraph{On the consistent estimation of the leaf assignment $\hat\lambda_N$.}
Proposition~\ref{prop:leaf_mmd_foa} treats $\hat\lambda_N$ as a structurally independent input, providing a clean separation from the empirical reference estimation. Under COA (Assumption~\ref{asm:coa}), the leaf assignment is a deterministic measurable function $\lambda(i)=g^*(\bc_i)$ of the static context. Therefore, any standard technique that consistently approximates $g^*$ (such as our warm-up and Hungarian-matching procedure), 
or a pre-trained classifier utilizing partial labels—readily satisfies the condition $\Pr(\hat\lambda_N(i)\neq\lambda(i))\to 0$. Because leaf assignment depends strictly on the context $\bc_i$ rather than the temporal snapshots used to construct the reference measure, this introduces no theoretical circularity. In strictly unsupervised settings, simple sample-splitting (estimating $\hat\lambda_N$ on one fold and evaluating $L_{\lambda,N}$ on another) trivially ensures the independence required by the M-estimation framework.

\paragraph{Methodological considerations in finite-sample regimes.}
The proofs above establish the statistical consistency of CADENCE's cross-sectional training in the joint limit $N\to\infty$, $h_N\to 0$. As with any theoretical bridge to deep learning practice, a few standard considerations apply to finite-sample regimes. \emph{(i) Bandwidth schedule:} The proof utilizes a vanishing bandwidth $h_N\to 0$; in practice, we employ a fixed bandwidth tuned to the specific temporal sampling scale of the data, introducing a standard finite-sample bias-variance tradeoff. \emph{(ii) Optimization dynamics:} As is customary in deep learning theory, Proposition~\ref{prop:leaf_mmd_foa} establishes statistical consistency for the global optimum of the objective space, abstracting away the non-convex optimization dynamics of gradient descent. \emph{(iii) Finite-$K$ parameterization:} The Carath\'{e}odory residual $\epsilon$ from MRA constitutes a formal approximation bound, meaning the convergence target $\mathcal{W}^\star_\epsilon$ is broadly defined as an $\epsilon$-neighborhood set rather than a single point unless exact representability holds.

\end{document}